%% file: main.tex
\documentclass{article}

\newcommand{\tool}[0]{MSSR}

\usepackage{multirow}

\usepackage{microtype}
\usepackage{graphicx}
\usepackage{subcaption}
\usepackage{booktabs} % for professional tables

\usepackage{hyperref}

\usepackage[preprint]{icml2026}

\usepackage{amsmath}
\usepackage{amssymb}
\usepackage{mathtools}
\usepackage{amsthm}

\usepackage[capitalize,noabbrev]{cleveref}

\theoremstyle{plain}
\newtheorem{theorem}{Theorem}[section]
\newtheorem{proposition}[theorem]{Proposition}
\newtheorem{lemma}[theorem]{Lemma}

\theoremstyle{definition}

\theoremstyle{remark}

\usepackage[textsize=tiny]{todonotes}

\icmltitlerunning{MSSR: Memory-Aware Adaptive Replay for Continual LLM Fine-Tuning}

\begin{document}

\twocolumn[
  \icmltitle{MSSR: Memory-Aware Adaptive Replay for Continual LLM Fine-Tuning}

  % It is OKAY to include author information, even for blind submissions: the
  % style file will automatically remove it for you unless you've provided
  % the [accepted] option to the icml2026 package.

  % List of affiliations: The first argument should be a (short) identifier you
  % will use later to specify author affiliations Academic affiliations
  % should list Department, University, City, Region, Country Industry
  % affiliations should list Company, City, Region, Country

  % You can specify symbols, otherwise they are numbered in order. Ideally, you
  % should not use this facility. Affiliations will be numbered in order of
  % appearance and this is the preferred way.
  \icmlsetsymbol{equal}{*}

  \begin{icmlauthorlist}
    \icmlauthor{Yiyang Lu}{CUHK}
    \icmlauthor{Yu He}{NJU}
    \icmlauthor{Jianlong Chen}{CUHK}
    \icmlauthor{Hongyuan Zha}{CUHK}
  \end{icmlauthorlist}

  \icmlaffiliation{CUHK}{The Chinese University of Hong Kong, Shenzhen}
  \icmlaffiliation{NJU}{Nanjing University}

  \icmlcorrespondingauthor{Yiyang Lu}{yiyanglu1@link.cuhk.edu.cn}

  % You may provide any keywords that you find helpful for describing your
  % paper; these are used to populate the "keywords" metadata in the PDF but
  % will not be shown in the document
  \icmlkeywords{Machine Learning, ICML}

  \vskip 0.3in
]

% this must go after the closing bracket ] following \twocolumn[ ...

% This command actually creates the footnote in the first column listing the
% affiliations and the copyright notice. The command takes one argument, which
% is text to display at the start of the footnote. The \icmlEqualContribution
% command is standard text for equal contribution. Remove it (just {}) if you
% do not need this facility.

% Use ONE of the following lines. DO NOT remove the command.
% If you have no special notice, KEEP empty braces:
\printAffiliationsAndNotice{}  % no special notice (required even if empty)
% Or, if applicable, use the standard equal contribution text:
% \printAffiliationsAndNotice{\icmlEqualContribution}

\begin{abstract}
\input{sections/0-abstract}

\end{abstract}

\input{sections/1-introduction}
\input{sections/2-preliminary}
\input{sections/3-methods}
\input{sections/4-experiments}
\input{sections/5-related_work}

\input{sections/6-conclusion}
\input{sections/impact_statement}
\input{sections/Acknowledgements}

% In the unusual situation where you want a paper to appear in the
% references without citing it in the main text, use \nocite
% \nocite{langley00}

\bibliography{ref_paper}
\bibliographystyle{icml2026}

%%%%%%%%%%%%%%%%%%%%%%%%%%%%%%%%%%%%%%%%%%%%%%%%%%%%%%%%%%%%%%%%%%%%%%%%%%%%%%%
%%%%%%%%%%%%%%%%%%%%%%%%%%%%%%%%%%%%%%%%%%%%%%%%%%%%%%%%%%%%%%%%%%%%%%%%%%%%%%%
% APPENDIX
%%%%%%%%%%%%%%%%%%%%%%%%%%%%%%%%%%%%%%%%%%%%%%%%%%%%%%%%%%%%%%%%%%%%%%%%%%%%%%%
%%%%%%%%%%%%%%%%%%%%%%%%%%%%%%%%%%%%%%%%%%%%%%%%%%%%%%%%%%%%%%%%%%%%%%%%%%%%%%%
\newpage
\appendix
\onecolumn
\input{sections/appendix}
%%%%%%%%%%%%%%%%%%%%%%%%%%%%%%%%%%%%%%%%%%%%%%%%%%%%%%%%%%%%%%%%%%%%%%%%%%%%%%%
%%%%%%%%%%%%%%%%%%%%%%%%%%%%%%%%%%%%%%%%%%%%%%%%%%%%%%%%%%%%%%%%%%%%%%%%%%%%%%%

\end{document}

%% file: sections/0-abstract.tex
Continual fine-tuning of large language models (LLMs) is becoming increasingly crucial as these models are deployed in dynamic environments where tasks and data distributions evolve over time. While strong adaptability enables rapid acquisition of new knowledge, it also exposes LLMs to catastrophic forgetting, where previously learned skills degrade during sequential training.
Existing replay-based strategies, such as fixed interleaved replay, accuracy-supervised, and loss-driven scheduling, remain limited: some depend on heuristic rules and provide only partial mitigation of forgetting, while others improve performance but incur substantial computational overhead. 
Motivated by retention dynamics under sequential fine-tuning, we propose Memory-Inspired Sampler and Scheduler Replay (\textbf{\tool}), an experience replay framework that estimates sample-level memory strength and schedules rehearsal at adaptive intervals to mitigate catastrophic forgetting while maintaining fast adaptation.
Extensive experiments across three backbone models and 11 sequential tasks show that \textbf{\tool} consistently outperforms state-of-the-art replay baselines, with particularly strong gains on reasoning-intensive and multiple-choice benchmarks.

%% file: sections/1-introduction.tex
\section{Introduction}
\label{sec:introduction}
\input{figures/strategy_model}
Large Language Models (LLMs) have demonstrated strong capabilities across a wide range of natural language processing tasks~\citep{qin2024large,hadi2023large,zhao2023survey,kaddour2023challenges,yang2024harnessing,zhuang2023through,team2024gemini,guo2025deepseek,chen2024internvl}. 
As these models are increasingly deployed in dynamic and evolving environments, there is a growing demand for continual learning (CL), enabling models to acquire new knowledge incrementally while retaining previously learned skills~\citep{wang2024comprehensive,chen2018lifelong}. 
This need is particularly evident in domains such as healthcare~\citep{lee2020clinical,amrollahi2022leveraging}, personalized applications~\citep{cai2022reloop,yang2025pcl}, and law and policy~\citep{chalkidis2021lexglue,zhang2023reformulating}. 
However, continual fine-tuning of LLMs remains challenging due to representation drift and gradient interference, often resulting in \textit{catastrophic forgetting}~\citep{van2024continual,luo2025empirical,zhai2023investigating,ren2024analyzing}.

Replay-based continual learning has been widely recognized as an effective strategy for mitigating catastrophic forgetting, with representative methods such as \textit{AQM}~\citep{caccia2020online}, \textit{GEM}~\citep{lopez2017gradient}, \textit{A-GEM}~\citep{chaudhry2018efficient}, and \textit{LOGD}~\citep{tang2021layerwise} primarily focusing on buffer construction and memory utilization. 
Beyond storage efficiency, prior work has also explored replay scheduling mechanisms, including fixed interleaving, accuracy-based replay~\citep{bang2021rainbow}, and loss-driven scheduling~\citep{aljundi2019online}. 
Despite this progress, existing replay strategies still exhibit notable limitations: 
(i) they are largely heuristic and lack grounding in cognitive memory theory, limiting principled scheduling decisions~\citep{murre2015replication}; 
(ii) they inadequately model the temporal heterogeneity of forgetting, often assuming uniform replay intervals across time scales; and 
(iii) their scalability to LLM fine-tuning remains unclear, as most evaluations focus on small-scale or short-horizon settings, while monitoring overhead becomes prohibitive in long training runs~\citep{ke2023continual}.

As illustrated in \autoref{fig:strategy-model}, we conceptually compare representative replay scheduling strategies. 
Fixed replay applies rehearsal at uniform intervals, while loss- and accuracy-based methods trigger replay based on performance signals. 
Although these strategies offer different trade-offs, they remain reactive and lack a principled mechanism to align replay timing with the evolving forgetting dynamics of the model.

To address these limitations, we propose \tool, a continual fine-tuning framework for LLMs inspired by the Ebbinghaus forgetting curve. 
Rather than relying on fixed or reactive replay triggers, \tool\ models memory retention as a time-dependent decay process and schedules replay accordingly, progressively expanding replay intervals as model stability increases.
This design provides a cognitively motivated yet practical alternative to heuristic replay strategies for long-horizon LLM continual learning.

To rigorously establish the validity of our contributions, we organize our work into three interrelated stages. We first propose an memory-inspired replay scheduling framework that bridges cognitive memory theory with continual learning for LLMs. Building on this design, we further introduce a methodological perspective that highlights how cognitively motivated scheduling can serve as a principled alternative to existing heuristic strategies\citep{howard2002distributed,kaplan2020scaling}. Finally, we validate the effectiveness of our approach through extensive experiments on reasoning benchmarks such as GSM8K\citep{cobbe2021gsm8k}, MATH\citep{hendrycks2021measuring}, and MMLU\citep{hendrycks2021ethics}, showing that it achieves favorable retention–efficiency trade-offs and substantially mitigates forgetting with minimal computational overhead.
In summary, our contributions are as follows:

\begin{itemize}
    \item \textbf{Framework.} We introduce a memory-aware adaptive replay sampler and scheduler that bridges cognitive memory theory and continual learning in LLMs.
    \item \textbf{Methodological insight.} We show how cognitively motivated scheduling provides a principled alternative to existing heuristic replay strategies.
    \item \textbf{Empirical experiments.} We demonstrate through experiments on long-sequence reasoning benchmarks (e.g., GSM8K, MATH, MMLU) that our approach improves retention–efficiency trade-offs and mitigates catastrophic forgetting with minimal overhead.
\end{itemize}

%% file: figures/strategy_model.tex
\begin{figure}[t]
  \centering
  % 第1行
  \begin{subfigure}[t]{0.49\columnwidth}
    \centering
    \includegraphics[width=\linewidth]{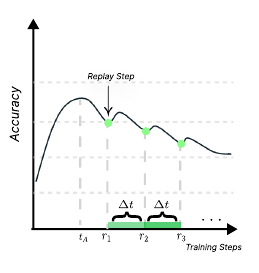}
    \caption{Fixed Replay}
    \label{fig:fixed}
  \end{subfigure}\hfill
  \begin{subfigure}[t]{0.49\columnwidth}
    \centering
    \includegraphics[width=\linewidth]{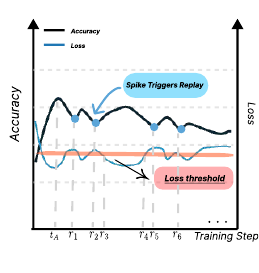}
    \caption{Loss-Based Replay}
    \label{fig:loss}
  \end{subfigure}

  \vspace{2mm} % 行间距，可调

  % 第2行
  \begin{subfigure}[t]{0.49\columnwidth}
    \centering
    \includegraphics[width=\linewidth]{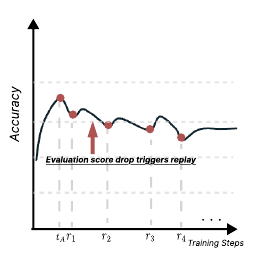}
    \caption{Accuracy-Based Replay}
    \label{fig:accuracy}
  \end{subfigure}\hfill
  \begin{subfigure}[t]{0.49\columnwidth}
    \centering
    \includegraphics[width=\linewidth]{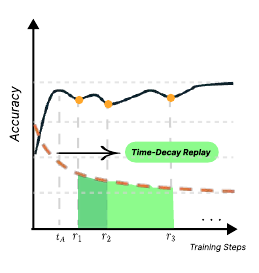}
    \caption{MSSR (Ours)}
    \label{fig:mssr}
  \end{subfigure}

  \caption{
Comparison of replay triggering strategies in continual fine-tuning.
(a) \textbf{Fixed replay} performs replay at a constant interval, ignoring optimization dynamics.
(b) \textbf{Loss-based replay} triggers replay when the loss exceeds a threshold, but noisy high-frequency fluctuations can cause frequent spurious triggers.
(c) \textbf{Accuracy-based replay} reacts to evaluation drops, yet often suffers from lag since replay starts after accuracy has already degraded.
(d) \textbf{MSSR (ours)} is time-aware and memory-inspired, scheduling replay based on time-dependent retention to stabilize long-term performance.
}
  \label{fig:strategy-model}
\end{figure}

%% file: sections/2-preliminary.tex
\section{Preliminaries}

\paragraph{Catastrophic Forgetting Measurement}
In continual learning or multi-stage fine-tuning, the model is sequentially trained on a series of datasets
$\{D_1, D_2, \dots, D_T\}$, each corresponding to a task or domain.
Let $F_t(D_i)$ denote the performance (e.g., accuracy or loss) of the model after completing the training on dataset $D_t$
and evaluated on the previously seen dataset $D_i$ $(i \le t)$.
The average forgetting is defined as
\begin{equation}
    \mathcal{F} = \frac{1}{T-1}
    \sum_{i=1}^{T-1}
    \max_{t > i} [F_i(D_i) - F_t(D_i)],
    \label{eq:forgetting}
\end{equation}
where a larger $\mathcal{F}$ indicates more severe forgetting.
This metric quantifies the degradation of model performance on previous tasks as new tasks are learned.

\paragraph{Experience Replay Formalization}
Experience replay mitigates catastrophic forgetting by mixing old and new data during fine-tuning.
Let $D_{\text{new}}$ and $B_{\text{replay}}$ denote the current training data and the replay buffer containing representative samples from previous datasets, respectively.
At each training step, the model is optimized over a mixed dataset:
\begin{equation}
    D_{\text{mix}} = D_{\text{new}} \cup \{(x_i, y_i) \sim B_{\text{replay}}\}.
\end{equation}
The total training objective is
\begin{equation}
\begin{split}
    \mathcal{L}_{\text{total}}
    = &~ \mathbb{E}_{(x,y)\sim D_{\text{new}}}
        \!\big[\ell(f_\theta(x),y)\big] \\[2pt]
      &~ + \lambda~
        \mathbb{E}_{(x,y)\sim B_{\text{replay}}}
        \!\big[\ell(f_\theta(x),y)\big],
\end{split}
\label{eq:replay_loss}
\end{equation}
where $\ell(\cdot)$ is the task loss, $\lambda$ controls the replay ratio,
and $\theta$ denotes model parameters.
By periodically re-exposing the model to past samples,
experience replay alleviates parameter drift and knowledge degradation.

\paragraph{Forgetting Curve as Scheduling Inspiration}
The Ebbinghaus forgetting curve characterizes memory retention as a monotonically decreasing function of elapsed time,
with repeated reviews progressively slowing down the decay and extending optimal review intervals.
In this work, we adopt this principle as a \emph{heuristic inspiration} rather than a literal cognitive model.
Specifically, it motivates time-dependent replay scheduling and memory-aware prioritization,
without assuming that large language models follow the same forgetting dynamics as humans.

%% file: sections/3-methods.tex
\section{\tool: Modeling Memory Decay for Time-Dependent Replay}
\label{sec:modeling}
\input{figures/framework}

\subsection{Sample-Level Memory Strength Modeling}
\label{method: sample_modeling}

We model the retention of each sample $i$ as a stochastic decay process modulated by time and difficulty, 
following classical forgetting and survival-based memory models~\citep{ebbinghaus1885gedachtnis, rubin1996one, wixted2004psychology}. 
Let $t\!\in\!\mathbb{N}$ denote training steps and $\mathcal{R}_i$ the set of replay exposures. 
We define a \emph{memory strength} $m_{i,t}\!\in\!(0,1]$ and a \emph{stability} variable $S_{i,t}\!>\!0$ controlling resistance to forgetting:
\begin{equation}
\begin{aligned}
m_{i,t+1} &= m_{i,t}\exp(-h_{i,t}), \\
h_{i,t} &= \frac{\alpha_i+\gamma_d\,\phi(\widehat{\ell}_{i,t})}{S_{i,t}},
\end{aligned}
\label{eq:retention_update}
\end{equation}
where $\alpha_i$ denotes baseline decay, $\gamma_d$ controls loss sensitivity, 
and $\phi$ is a monotone mapping (e.g., calibrated sigmoid) applied to normalized loss $\widehat{\ell}_{i,t}$.\footnote{In practice, $\widehat{\ell}_{i,t}$ is computed via EMA-denoised loss followed by quantile normalization; see the Appendix for details.}

\paragraph{Review and Consolidation.}
At each review step $t\!\in\!\mathcal{R}_i$, the memory state is reset and stabilized through:
\begin{equation}
\begin{split}
m_{i,t^{+}} &= 1, \\[3pt]
S_{i,t^{+}} &= S_{i,t}
+ \eta_s (S_{\max} - S_{i,t})^{\beta} e^{-\rho \Delta t_i} \\[-2pt]
&\quad\;\times (1 - m_{i,t})^{\gamma_s}
+ \epsilon_t,
\end{split}
\label{eq:reset_spacing}
\end{equation}

where $\Delta t_i = t - t_i^{\star}$ is the elapsed time since the last review, 
and $\epsilon_t \!\sim\! \mathcal{N}(0,\sigma_s^2)$ captures stochastic variation.
Here, $\eta_s$, $\beta$, $\rho$, and $\gamma_s$ respectively control learning rate, saturation, spacing sensitivity, and error-driven reinforcement.
This rule jointly models saturating stability growth, spacing modulation, error-dependent consolidation, and stochastic replay efficiency.

\paragraph{Epoch-Level Update.}
Unrolling Eq.~\eqref{eq:retention_update} gives:
\begin{equation}
m_{i,t}\approx\exp\!\Big(-\!\!\int_{t_i^{\star}}^{t}\!h_i(\tau)d\tau\Big),
\label{eq:discrete_survival}
\end{equation}
where $t_i^{\star}$ is the last review step. 
To reduce computation in large-scale fine-tuning, we use a piecewise-constant hazard updated at epoch boundaries 
$\{T_0, T_1, \dots, T_E\}$ with $\Delta t_e = T_e - T_{e-1}$:
\begin{equation}
\begin{aligned}
m_{i,T_e}=m_{i,T_{e-1}}e^{-h_{i,T_e}\Delta t_e}, \\
h_{i,T_e}=\frac{\alpha_i+\gamma_d\,\phi(\widehat{\ell}_{i,T_e})}{S_{i,T_e}}.
\end{aligned}
\label{eq:epoch_update}
\end{equation}
This provides an efficient discrete approximation to the continuous retention dynamics suitable for large-scale LLM fine-tuning.

\subsection{Scheduling Dataset-Level Replay Dynamics}
Building on the sample-level memory formulation, 
we now model replay scheduling at the dataset level, 
determining \emph{when} replay occurs and \emph{how much} past data is mixed with new samples at each step. 
This process captures the temporal organization of rehearsal events, 
analogous to the spaced reinforcement observed in human memory systems.

\paragraph{Replay timing and spacing expansion.}
Let $t\!\in\!\mathbb{N}$ denote training steps, and 
$\mathcal{T}_r=\{t_1, t_2, \dots\}$ be the sequence of replay checkpoints. 
Following the spacing principle, the interval between two adjacent replay events expands gradually:
\begin{equation}
\Delta t_{r}^{(k+1)} = \Delta t_{r}^{(k)} (1 + \eta_p\, e^{-\rho_p k}),
\label{eq:interval_expand}
\end{equation}
where $\Delta t_{r}^{(k)} = t_{k} - t_{k-1}$ is the $k$-th replay interval, 
$\eta_p$ controls the expansion rate, and $\rho_p$ regulates how quickly the spacing growth saturates.
This ensures that replay events are dense in the early stage to prevent rapid forgetting and gradually sparser as memory stabilizes.
(See Appendix~\ref{app:interval_expansion} for derivation from the stability dynamics.)
\paragraph{Dynamic replay ratio and composition.}
At each replay step $t_k\!\in\!\mathcal{T}_r$, 
we construct a mixed batch combining current-task data $D_{\text{new}}$ 
and replayed data $B_{\text{replay}}$ from previous tasks:
\begin{equation}
\begin{aligned}
D_{\text{mix}}^{(t_k)} = 
D_{\text{new}} \cup 
\{(x_i, y_i)\sim B_{\text{replay}}\},
\\
\lambda_{t_k} = 
\lambda_0\, e^{-\beta_r t_k} + \lambda_{\min},
\end{aligned}
\label{eq:dynamic_ratio}
\end{equation}
where $\lambda_{t_k}$ controls the ratio of replayed samples within the mixed batch. The exponential decay form arises naturally from an optimal control perspective balancing rehearsal benefit and computational cost 
(Appendix~\ref{app:ratio_derivation}). Initially, $\lambda_{t_k}\!\approx\!\lambda_0$ encourages strong rehearsal, 
while the ratio decays exponentially toward $\lambda_{\min}$ 
as training progresses and model stability increases.

\paragraph{Interaction with sample-level retention.}
Each replayed sample $i$ maintains its own memory strength $m_{i,t}$,
which determines its forgetting risk. 
To focus replay on more unstable samples, 
we weight each sample's replay probability by a normalized inverse retention score:
\begin{equation}
p_i^{(t_k)} = 
\frac{m_{i,t_k}^{-\zeta}}
{\sum_{j\in B_{\text{replay}}} m_{j,t_k}^{-\zeta}},
\label{eq:prob_weight}
\end{equation}
where $\zeta\!>\!0$ controls the prioritization intensity. 
This design extends memory-based prioritization~\citep{schaul2015prioritized}
to continual fine-tuning: samples with lower $m_{i,t}$ (faster forgetting)
are replayed more often. 
A full derivation linking this weighting to the expected consolidation gain
is provided in Appendix~\ref{app:prob_derivation}.

\subsection{Integration Components: the \tool\ Framework}
\paragraph{Framework Overview.}
To operationalize the sample-level and dataset-level formulations introduced above, we develop a unified continual-learning framework, termed \tool. 
Built upon the \textbf{LoRA}-based fine-tuning pipeline \citep{hu2022lora} implemented in \textbf{LLaMAFactory} \citep{zheng2024llamafactory}, \tool\ transforms the theoretical retention dynamics into a practical replay-driven training algorithm.
As illustrated in Fig.~\ref{fig:framework}, \tool\ implements a memory-aware continual fine-tuning pipeline organized as a closed-loop workflow.
On the left, a sample memory module tracks per-sample memory strength $m_{i,t}$, which is updated at the epoch level based on observed loss and time-dependent decay. These memory states are converted into replay probabilities via probabilistic sampling, prioritizing samples with weaker memory or longer unseen intervals.
On the right, an adaptive replay scheduler determines both when to trigger replay, using an expanding-interval strategy, and how many samples to replay, through a time-decaying replay ratio. The selected replay set is mixed with the current task data and jointly optimized using LoRA-based fine-tuning.
Together, these components provide a unified framework for mitigating catastrophic forgetting in sequential learning.

\paragraph{Sample Memory Tracking.}
This module maintains per-sample retention states during fine-tuning.
At each epoch, the observed loss $\widehat{\ell}_{i,t}$ updates the memory strength and stability according to Eq.~\eqref{eq:retention_update}, capturing time-dependent forgetting and difficulty-dependent adaptation.
The resulting states $\{m_{i,t}, S_{i,t}\}$ provide replay-related statistics that guide the subsequent scheduler.

\paragraph{Replay Scheduler.}
This module converts the tracked memory states into adaptive replay decisions. 
Each sample is assigned a replay probability
$p_i^{\text{replay}} \propto (1 - m_{i,t})^{\beta_m} e^{\rho \Delta t_i}$,
favoring items with weaker memory or longer unseen intervals.
An adaptive replay ratio $r_t$ further controls the overall replay volume.
The selected replay set $\mathcal{R}_t$ is then mixed with new samples for LoRA-based fine-tuning, enabling memory-aware continual learning.

\paragraph{LoRA-based Fine-tuning Integration.}
After the replay subset $\mathcal{R}_t$ is selected, it is merged with the current task data $\mathcal{D}_t$ for joint optimization under the parameter-efficient fine-tuning scheme introduced above.
During each training step, the combined dataset $\mathcal{D}_t \cup \mathcal{R}_t$ is used to compute the total loss:
\begin{equation}
\mathcal{L}_t = 
\mathbb{E}_{(x,y)\sim \mathcal{D}_t \cup \mathcal{R}_t}
    \big[\,\ell(f_{\theta}(x), y)\,\big],
\label{eq:lora_loss}
\end{equation}
where $f_{\theta}$ denotes the backbone model equipped with LoRA adapters and $\ell(\cdot)$ is the task-specific loss.
Gradients from both newly introduced and replayed samples are aggregated to update the LoRA parameters $\theta_{\text{LoRA}}$ via backpropagation.
This joint training mechanism enables \tool\ to continuously refine task-relevant knowledge while mitigating forgetting, leveraging replay without increasing the total number of trainable parameters.

\paragraph{Training Workflow.}
As shown in Algorithm~\ref{alg:tool}, the overall training of \tool\ forms an end-to-end continual learning loop integrating memory updates, replay scheduling, and LoRA-based optimization. 
At initialization, all samples start with $m_{i,0}=1$ and default stability $S_{i,0}$. 
In each epoch, sample losses update memory states via Eq.~\eqref{eq:retention_update}; 
the scheduler then selects a replay subset $\mathcal{R}_t$ by Eq.~\eqref{eq:prob_weight}; 
and the model is optimized on $\mathcal{D}_t \cup \mathcal{R}_t$ using the joint loss in Eq.~\eqref{eq:lora_loss}. 
Afterward, memory statistics are recorded for monitoring and analysis.

%% file: figures/framework.tex
\begin{figure*}[t]
    \centering
    \includegraphics[width=\textwidth]{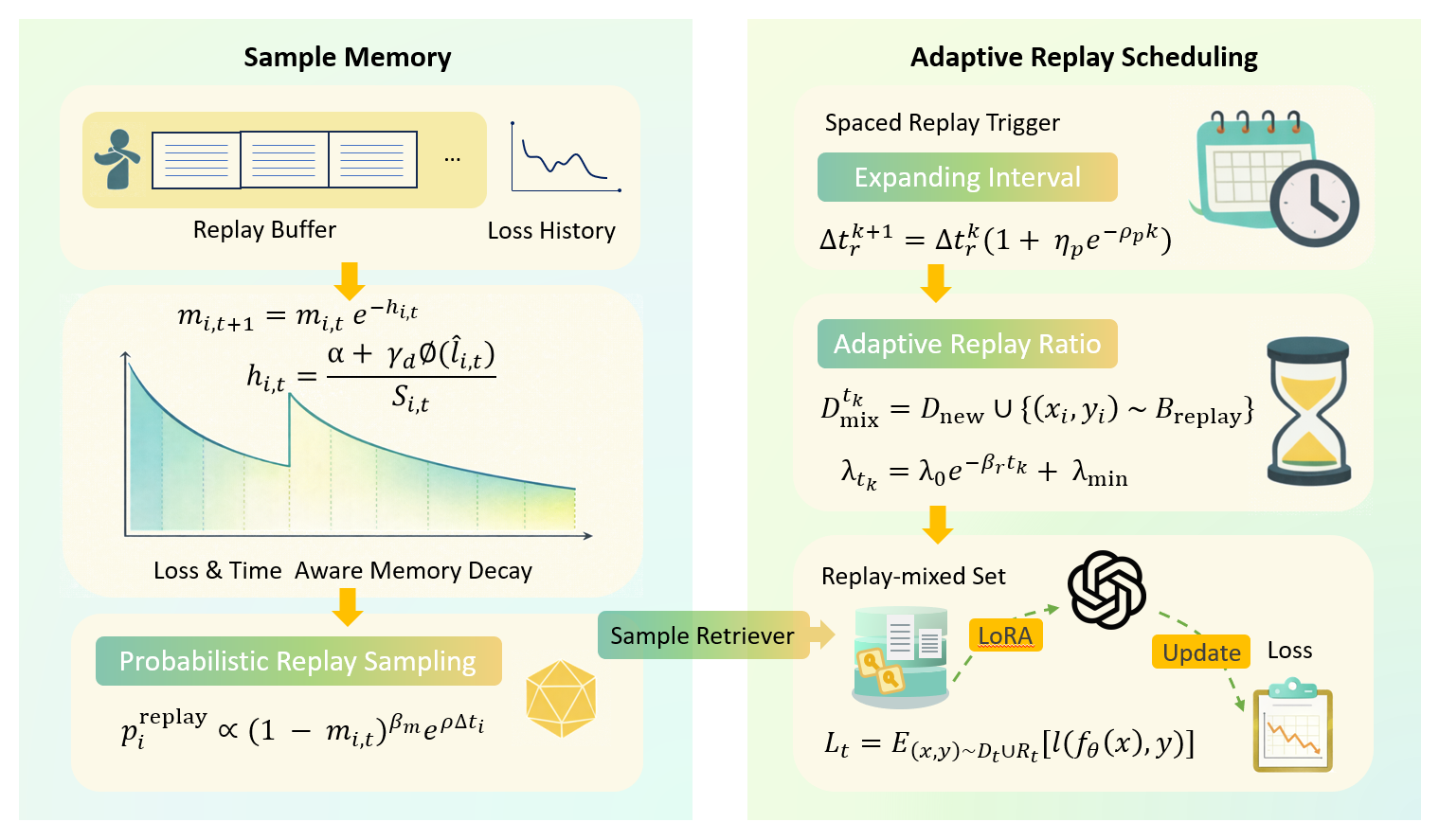}
    \caption{
        \textbf{Overall architecture of the \tool\ framework.}
        The framework consists of two core components that jointly govern replay behavior.
        (1) a sample-level replay sampler (left), which tracks per-sample memory strength by modeling
        loss-driven and time-dependent decay, and converts memory states into probabilistic replay weights.
        (2) an adaptive replay scheduler (right), which which regulates replay timing via expanding intervals and replay volume via a time-decaying ratio.
        At each replay event, sampled data are merged with current-task samples and used for LoRA-based
        fine-tuning, forming a closed-loop, memory-aware continual learning process.
    }
    \label{fig:framework}
\end{figure*}

%% file: sections/4-experiments.tex
\section{Experiments}
\subsection{Experimental Setup}

\paragraph{Tasks and Datasets.}
To rigorously evaluate catastrophic forgetting under different replay strategies, we adopt a sequential multi-domain fine-tuning setup. Our primary evaluation employs a progression of three datasets of increasing reasoning complexity, trained in the order of \textbf{Alpaca-GPT4}~\citep{peng2023instruction} (general instruction following) $\rightarrow$ \textbf{GSM8K-RFT}~\citep{cobbe2021gsm8k} (elementary mathematical reasoning) $\rightarrow$ \textbf{Competition Math}~\citep{hendrycksmath2021} (advanced problem solving). This setup allows us to examine knowledge interference between heterogeneous domains and retention when tasks share closer reasoning structures.

To further assess the scalability and robustness of our method in long-sequence continual learning, we extend the evaluation to an \textbf{11-task sequence} encompassing diverse domains such as AGNews, SQuAD, SciQ, BoolQ, ARC, and multiple MATH subsets. Detailed dataset descriptions and preprocessing procedures are provided in Appendix~\ref{app:training_datasets}.
\input{figures/table-training_datasets}

\paragraph{Evaluation Metrics.}
We evaluate the model after each training stage on the held-out test splits of the corresponding datasets. 
For SQuAD, we report the standard token-level F1 score following the official evaluation protocol.
For other tasks including mathematical reasoning datasets (GSM8K-RFT and Competition Math), we report \textbf{exact-match accuracy}. Additionally, to comprehensively evaluate the model's general knowledge retention amidst sequential fine-tuning, we consistently report its performance on the MMLU benchmark after each stage. An \emph{average normalized score} aggregated across all tasks is also computed to capture global retention stability. Formal metric definitions are provided in Appendix~\ref{app:eval_metric}.

\paragraph{Baselines.}
We compare several continual fine-tuning strategies:
(1) \textbf{No Replay}: Sequential training without reusing previous data.
(2) \textbf{Fixed Replay}: Uniformly replaying a constant subset of prior samples at each stage.
(3) \textbf{Loss-based Replay}: Triggering replay based on increased loss landscape sharpness/variance.
(4) \textbf{Accuracy-based Replay}: Activating replay upon detecting significant performance drops on previous validation sets.
(5) \textbf{\tool}: Our proposed method, where \tool\textsubscript{spl} uses the Ebbinghaus-inspired sampler, \tool\textsubscript{sch} uses the Ebbinghaus-inspired scheduler, and \tool\textsubscript{full} integrates both components.

\paragraph{Model Training.}
We perform all fine-tuning experiments using the \textbf{LoRA} framework~\citep{hu2022lora} as implemented in the \textbf{LLaMA-Factory} toolkit~\citep{zheng2024llamafactory}. We primarily use the \textbf{Qwen2.5-7B} model~\citep{yang2024qwen2} as the base architecture for detailed analysis, with additional experiments on \textbf{Gemma2-9B}~\citep{team2024gemma}, \textbf{Llama-3.1-8B}~\citep{dubey2024llama} and \textbf{Mistral-7B-v0.3}~\citep{jiang2023mistral7b} to ensure generalizability. Training is conducted in a distributed multi-GPU environment with NVIDIA A100 GPUs (80 GB each). For each dataset, fine-tuning proceeds until validation performance converges, with the total number of optimization steps kept consistent across all replay strategies to ensure fair comparison. All experiments use identical optimizer configurations and random seeds for reproducibility. Hyperparameter settings are detailed in Appendix~\ref{app:training_details}.

\subsection{Main Results}
We report the performance of different replay strategies under the standard 3-task setting in Table~\ref{tab:main_results}, 
and further evaluate their robustness under extended 11-task continual learning in Table~\ref{tab:longseq_all}. 
Several key observations emerge.

\textbf{(1) MSSR$_{\text{full}}$ achieves the strongest and most consistent performance across models and tasks.}  
Across both tables, MSSR$_{\text{full}}$ attains the best results on the majority of datasets and backbones, indicating that jointly modeling sample-level memory strength and adaptive replay scheduling yields clear benefits.  
MSSR$_{\text{sch}}$ and MSSR$_{\text{spl}}$ also consistently outperform baseline methods, frequently achieving the second-best performance across tasks, demonstrating the effectiveness of both components individually.

\textbf{(2) Sample-level and schedule-level variants exhibit complementary trade-offs.} 
In comparison, MSSR$_{\text{spl}}$ tends to outperform MSSR$_{\text{sch}}$ on a larger number of tasks, reflecting the advantage of fine-grained, sample-level replay prioritization.
However, MSSR$_{\text{sch}}$ requires lower computational overhead, as it avoids intensive per-sample operations.  
As a result, both variants offer practical advantages depending on resource constraints, while their combination in MSSR$_{\text{full}}$ provides the most robust overall performance.

\textbf{(3) Accuracy-based replay is competitive but computationally expensive.}  
The accuracy-based baseline achieves strong results on several tasks, 
often comparable to MSSR variants. However, it relies on frequent evaluation to trigger replay, leading to substantially higher computational and time costs.  
In contrast, MSSR attains similar or better performance without repeated evaluations, making it a more efficient and scalable solution.

\textbf{(4) MSSR is particularly effective at mitigating early-task forgetting in long sequences.} 
In the 11-task setting (Table~\ref{tab:longseq_all}), MSSR$_{\text{full}}$ achieves the best performance on most early tasks (e.g., the first six datasets), while the second-best results are almost always obtained by MSSR$_{\text{sch}}$ or MSSR$_{\text{spl}}$. This pattern highlights the strength of MSSR in alleviating catastrophic forgetting of earlier tasks as the task horizon grows.

\textbf{(5) Gains are task-dependent and most pronounced on moderately difficult benchmarks.}  
MSSR variants yield particularly large improvements on ARC, a multiple-choice reasoning benchmark, with gains of up to \textbf{+0.108} compared to baselines.  
This suggests that MSSR is especially effective when pre-trained LLMs exhibit low initial accuracy but can benefit from targeted rehearsal.  
It also achieves strong performance on \textbf{SQuAD}.
In contrast, for simpler tasks such as MATH$_1$, performance differences across methods are smaller, likely because these skills are already well captured during pre-training.

\subsection{Ablation Studies}
We analyze the influence of key hyperparameters, replay strategies, and memory buffer sizes on the overall replay behavior and continual learning performance of MSSR.

\paragraph{Replay Ratio.}
In our formulation (Eq.~\ref{eq:dynamic_ratio}), the replay ratio $\lambda_{t_k}$ is dynamically determined at each replay step rather than fixed a priori.  
To examine how this dynamic adjustment affects model performance, we simulate different initial replay coefficients $\lambda_0$ while keeping the decay schedule constant.  
Table~\ref{tab:ratio_ablation} reports the final test accuracy for representative tasks (Qwen2.5-7B, 11-task sequence).  

As shown, MSSR maintains stable performance over a wide range of $\lambda_0$, with best results around 0.10--0.20. This indicates that the method is robust to the choice of initial replay ratio, providing consistent accuracy without fine-tuning hyperparameters excessively.

\paragraph{Buffer Size.}
We next study the effect of buffer size, which determines the memory budget available for storing past samples. Smaller buffers exacerbate forgetting, while excessively large buffers may introduce redundant storage.  
Table~\ref{tab:buffer_size} reports average retention under different buffer sizes with the Ebbinghaus scheduler.  

Retention improves monotonically with buffer size, but marginal gains diminish beyond 2048 samples. This suggests that MSSR remains effective even under constrained memory budgets, thanks to cognitive-inspired scheduling.

\paragraph{Scheduler Variants.}
We evaluate the impact of different replay interval sequences to assess the robustness of scheduling strategies.  
Specifically, we compare fixed uniform intervals, geometric sequences, and the Ebbinghaus-inspired sequence.  
Results are shown in Table~\ref{tab:scheduler_variants}. All expanding-interval strategies outperform fixed spacing, with the Ebbinghaus sequence consistently achieving superior long-term retention. This confirms that cognitive-inspired scheduling provides a principled advantage over uniform or heuristic patterns.

\paragraph{Computational and Memory Overhead.}
Although MSSR reduces redundant replay, it requires maintaining per-sample memory strength and scheduling logic.  
Table~\ref{tab:overhead} reports normalized wall-clock time, peak memory, throughput, and per-step latency relative to a fixed-replay baseline (7B backbone). The overhead is minimal (3–5\% wall-clock, 4–6\% peak memory), as all updates involve only scalar operations per sample. No additional forward/backward passes are required, and throughput remains largely unchanged. Considering that MSSR improves accuracy by 1–3 points and reduces forgetting, this small computational cost is justified.

%% file: figures/table-training_datasets.tex
\begin{algorithm}[tb]
  \caption{Training workflow of the \tool\ framework}
  \label{alg:tool}
  \begin{algorithmic}
    \STATE {\bfseries Input:} dataset $\mathcal{D}$, initial parameters $\theta$, memory states $\{m_{i,0}, S_{i,0}\}$ 
    \FOR{each epoch $t = 1, \dots, T$}
        \FOR{each batch $(x_i, y_i)$ in $\mathcal{D}_t$}
            \STATE Compute loss $\widehat{\ell}_{i,t} = \ell(f_{\theta}(x_i), y_i)$
            \STATE Update $m_{i,t}, S_{i,t}$ according to Eq.~\eqref{eq:retention_update}
        \ENDFOR
        \STATE Estimate replay probabilities $p_i^{\text{replay}}$ using Eq.~\eqref{eq:prob_weight}
        \STATE Select replay set $\mathcal{R}_t$ based on $p_i^{\text{replay}}$ and ratio $r_t$
        \STATE Optimize LoRA parameters $\theta_{\text{LoRA}}$ on $\mathcal{D}_t \cup \mathcal{R}_t$ with loss Eq.~\eqref{eq:lora_loss}
        \STATE Record metrics $\{m_{i,t}, S_{i,t}, \widehat{\ell}_{i,t}\}$ for analysis
    \ENDFOR
  \end{algorithmic}
\end{algorithm}

\begin{table*}[t]
\centering
\caption{Basic 3-task continual learning results. Best results are shown in bold and second-best are underlined within each backbone.}
\label{tab:main_results}
\resizebox{\textwidth}{!}{
\begin{tabular}{l *{12}{c}}
\toprule
\multicolumn{1}{c}{Model} & \multicolumn{4}{c}{Mistral-7B-v0.3} & \multicolumn{4}{c}{Llama-3.1-8B} & \multicolumn{4}{c}{Qwen2.5-7B} \\
\addlinespace[0.5ex]
\cmidrule(r){1-1} 
\cmidrule(lr){2-5} \cmidrule(lr){6-9} \cmidrule(l){10-13}
Dataset & \multicolumn{1}{c}{MMLU} & \multicolumn{1}{c}{GSM8K} & \multicolumn{1}{c}{MATH} & \multicolumn{1}{c}{AVG} & \multicolumn{1}{c}{MMLU} & \multicolumn{1}{c}{GSM8K} & \multicolumn{1}{c}{MATH} & \multicolumn{1}{c}{AVG} & \multicolumn{1}{c}{MMLU} & \multicolumn{1}{c}{GSM8K} & \multicolumn{1}{c}{MATH} & \multicolumn{1}{c}{AVG} \\
\midrule
Vanilla              & 60.2 & 57.7 & 27.5 & 48.5 & 64.5 & 63.8 & 29.3 & 52.5 & 57.6 & 70.1 & 28.6 & 52.1 \\
R$_{\text{fixed}}$   & 60.5 & 58.8 & 28.3 & 49.2 & 65.1 & 64.4 & 29.5 & 53   & 58.5 & 70.5 & 29.3 & 52.8 \\
R$_{\text{loss}}$    & 60.7 & 58.4 & 28.8 & 49.3 & \underline{65.3} & 64.7 & 29.6 & 53.2 & 58.4 & \underline{72.3} & 30.4 & 53.7 \\
R$_{\text{accu}}$    & 61.0 & 59.3 & \underline{29.4} & \underline{49.9} & 64.9 & 64.5 & 29.9 & 53.1 & \textbf{59.5} & 71.7 & 29.9 & 53.7 \\
MSSR$_{\text{sch}}$  & 60.8 & \underline{59.5} & 28.6 & 49.6 & 64.8 & \underline{65.3} & 29.5 & 53.2 & 59.1 & 71.9 & \underline{31.1} & \underline{54.3} \\
MSSR$_{\text{spl}}$  & \underline{61.3} & 59.1 & 29.3 & \underline{49.9} & 65.1 & 64.7 & \underline{30.3} & \underline{53.4} & 58.9 & \underline{72.3} & 30.2 & 53.8 \\
MSSR$_{\text{full}}$ & \textbf{61.9} & \textbf{60.4} & \textbf{30.1} & \textbf{50.8} & \textbf{65.6} & \textbf{66.2} & \textbf{31.4} & \textbf{54.4} & \underline{59.2} & \textbf{72.5} & \textbf{31.7} & \textbf{54.5} \\
\bottomrule
\end{tabular}
}
\end{table*}

\begin{table*}[t]
\centering
\caption{Extended 11-task continual learning results. Best results are shown in bold and second-best are underlined within each backbone.}
\label{tab:longseq_all}
\resizebox{\textwidth}{!}{
\begin{tabular}{llccccccccccc}
\toprule
\textbf{Model} & \textbf{Method} 
& AGNews & SQuAD & SciQ & BoolQ & ARC & GSM8K 
& MATH$_1$ & MATH$_2$ & MATH$_3$ & MATH$_4$ & MATH$_5$ \\
\midrule

\multirow{7}{*}{Gemma2-9B}
& None 
& 0.714 & 73.98 & 0.952 & 0.889 & 0.336 & 0.520 
& 0.778 & 0.688 & 0.649 & \textbf{0.425} & 0.242 \\
& Fixed 
& 0.761 & 74.23 & 0.944 & 0.876 & 0.388 & 0.638 
& \textbf{0.873} & \underline{0.743} & 0.626 & 0.387 & 0.228 \\
& Loss 
& 0.758 & 73.42 & \underline{0.961} & 0.890 & 0.425 & 0.643 
& 0.847 & 0.726 & 0.632 & 0.394 & 0.257 \\
& Accu 
& 0.754 & 74.67 & 0.959 & 0.901 & 0.441 & 0.636 
& 0.864 & 0.731 & \underline{0.650} & 0.392 & 0.276 \\
& MSSR$_{\text{sch}}$ 
& \underline{0.772} & 75.83 & 0.952 & \underline{0.906} & 0.456 & \underline{0.654}
& 0.834 & 0.725 & 0.649 & 0.409 & \textbf{0.302} \\
& MSSR$_{\text{spl}}$ 
& 0.769 & \underline{76.10} & 0.957 & 0.903 & \underline{0.462} & 0.646
& 0.852 & 0.741 & 0.635 & 0.399 & 0.263 \\
& MSSR$_{\text{full}}$ 
& \textbf{0.785} & \textbf{77.16} & \textbf{0.969} & \textbf{0.911} & \textbf{0.487} & \textbf{0.665} 
& \underline{0.867} & \textbf{0.752} & \textbf{0.651} & \underline{0.412} & \underline{0.296} \\
\midrule

\multirow{7}{*}{Qwen2.5-7B}
& None 
& 0.622 & 70.59 & 0.926 & 0.878 & 0.385 & 0.540 
& \textbf{0.873} & 0.807 & 0.718 & 0.570 & 0.376 \\
& Fixed 
& 0.764 & 72.28 & 0.951 & 0.882 & 0.432 & 0.706 
& 0.841 & 0.780 & 0.701 & 0.606 & 0.383 \\
& Loss 
& 0.748 & 73.50 & 0.962 & 0.885 & 0.453 & 0.698 
& 0.842 & 0.776 & 0.725 & 0.611 & 0.380 \\
& Accu  
& 0.763 & \underline{73.89} & 0.955 & \underline{0.898} & 0.461 & 0.683 
& 0.849 & \textbf{0.822} & 0.751 & 0.594 & 0.372 \\
& MSSR$_{\text{sch}}$ 
& \underline{0.771} & 73.38 & 0.953 & 0.896 & 0.532 & 0.727 
& 0.825 & 0.798 & 0.763 & 0.596 & \underline{0.386} \\
& MSSR$_{\text{spl}}$ 
& 0.762 & 73.46 & \underline{0.966} & 0.893 & \underline{0.557} & \underline{0.733} 
& 0.855 & 0.786 & \underline{0.784} & \underline{0.623} & 0.375 \\
& MSSR$_{\text{full}}$ 
& \textbf{0.781} & \textbf{74.21} & \textbf{0.972} & \textbf{0.911} & \textbf{0.569} & \textbf{0.748} 
& \underline{0.864} & \underline{0.817} & \textbf{0.810} & \textbf{0.637} & \textbf{0.398} \\
\midrule

\multirow{7}{*}{Llama-3.1-8B}
& None 
& 0.781 & 76.03 & 0.920 & 0.837 & 0.598 & 0.532 
& 0.801 & 0.706 & 0.611 & 0.474 & 0.302 \\
& Fixed 
& 0.768 & 77.17 & 0.912 & 0.876 & 0.576 & 0.626 
& 0.783 & 0.715 & 0.671 & 0.518 & 0.329 \\
& Loss 
& \textbf{0.794} & 77.74 & 0.922 & 0.854 & 0.637 & 0.647
& \textbf{0.825} & 0.712 & 0.658 & 0.521 & 0.337 \\
& Accu 
& 0.771 & 78.53 & \underline{0.934} & 0.852 & 0.645 & \textbf{0.653} 
& 0.812 & 0.721 & 0.675 & 0.517 & \underline{0.352} \\
& MSSR$_{\text{sch}}$ 
& 0.756 & \underline{79.50} & 0.921 & 0.853 & \underline{0.688} & 0.635 
& 0.781 & 0.722 & \underline{0.694} & 0.507 & 0.345 \\
& MSSR$_{\text{spl}}$ 
& 0.775 & 78.26 & 0.933 & \underline{0.881} & 0.650 & 0.645 
& 0.814 & \underline{0.731} & 0.682 & \textbf{0.530} & 0.338 \\
& MSSR$_{\text{full}}$ 
& \underline{0.787} & \textbf{79.95} & \textbf{0.945} & \textbf{0.889} & \textbf{0.692} & \underline{0.651} 
& \textbf{0.825} & \textbf{0.746} & \textbf{0.705} & \underline{0.524} & \textbf{0.363} \\
\bottomrule
\end{tabular}
}
\end{table*}

% Insert Figure/Table: retention vs. buffer size
\begin{table}[h]
    \caption{Effect of buffer size on retention performance under the Ebbinghaus scheduler.}
    \label{tab:buffer_size}
    \centering
    \small
    \begin{tabular}{c|c}
        \toprule
        Buffer Size & Average Retention (\%) \\
        \midrule
        512   & 62.1 \\
        1024  & 71.4 \\
        2048  & 78.9 \\
        4096  & 80.3 \\
        \bottomrule
    \end{tabular}
\end{table}

\begin{table}[h]
\caption{
\textbf{Replay interval comparison.} 
Ebbinghaus-inspired spacing achieves the highest accuracy and lowest forgetting. 
(Uniform, Heuristic, and Cognitive correspond to the three row types.)
}
\label{tab:scheduler_variants}
\centering
\small
\setlength{\tabcolsep}{3.5pt}
\begin{tabular}{lcc}
\toprule
\textbf{Interval Sequence} & \textbf{Acc. (\%)} & \textbf{Forget ↓} \\
\midrule
Fixed ($\Delta t_r{=}3$)        & 84.6 & 0.0 \\
Geometric $\{1,3,7,14,30\}$     & 87.2 & 2.6 \\
Ebbinghaus $\{1,2,4,7,15\}$     & \textbf{89.1} & \textbf{4.5} \\
\bottomrule
\end{tabular}
\end{table}

% ===== Table 1: Replay Ratio =====
\begin{table}[h]
\centering
\caption{Sensitivity to initial replay ratio $\lambda_0$ (Qwen2.5-7B, 11-task sequence).}
\label{tab:ratio_ablation}
\resizebox{0.5\textwidth}{!}{%
\begin{tabular}{lccccccc}
\toprule
Strategy / $\lambda_0$ & fixed & loss & eval & 0.05 & 0.10 & 0.20 & 0.30 \\
\midrule
GSM8K Avg Acc & 0.706 & 0.698 & 0.727 & 0.716 & 0.725 & 0.748 & 0.731 \\
BoolQ Avg Acc & 0.876 & 0.854 & 0.865 & 0.874 & 0.892 & 0.889 & 0.883 \\
\bottomrule
\end{tabular}%
}
\end{table}

% ===== Table 2: Buffer Size =====
\begin{table}[h]
\centering
\caption{Effect of buffer size on average retention (\%).}
\label{tab:buffer_size}
\resizebox{0.5\textwidth}{!}{%
\begin{tabular}{c|cccc}
\toprule
Buffer Size (samples) & 512 & 1024 & 2048 & 4096 \\
\midrule
Average Retention (\%) & 68.7 & 73.5 & 75.9 & 76.8 \\
\bottomrule
\end{tabular}
}
\end{table}

% ===== Table 3: Computational Overhead =====
\begin{table}[h]
\centering
\caption{Training and memory overhead of MSSR (normalized to fixed replay).}
\label{tab:overhead}
\resizebox{0.5\textwidth}{!}{
\begin{tabular}{l|cccc}
\toprule
Method & Wall-clock Time & Peak Memory & Throughput (samples/s) & Per-step Latency  \\
\midrule
Fixed Replay & 1.00 & 1.00 & 1.00 & 1.00  \\
MSSR$_{\text{spl}}$ & 1.03 & 1.04 & 0.99 & 1.02  \\
MSSR$_{\text{sch}}$ & 1.04 & 1.05 & 0.98 & 1.02  \\
MSSR$_{\text{full}}$ & 1.05 & 1.06 & 0.98 & 1.05  \\
\bottomrule
\end{tabular}%
}
\end{table}

\begin{table}[h]
\centering
\caption{Effect of interval scheduling on average accuracy (\%) and forgetting drop (\%).}
\label{tab:scheduler_variants}
\resizebox{0.5\textwidth}{!}{
\begin{tabular}{c|c|cc}
\toprule
Interval Pattern & Interval Sequence & Avg Acc (\%) & Forget Drop (\%) \\
\midrule
Fixed & {3,3,3,...} & 76.1 & 4.8 \\
Geometric & {1,3,7,15,...} & 77.8 & 3.1 \\
\tool & {1,2,4,7,15,...} & 78.5 & 2.4 \\
\bottomrule
\end{tabular}
}
\end{table}

%% file: sections/5-related_work.tex
\section{Related Work}
\subsection{Catastrophic Forgetting in LLM Fine-Tuning}

When Large Language Models (LLMs) are fine-tuned sequentially across tasks, they often suffer from catastrophic forgetting, where performance on prior tasks degrades as new knowledge is acquired~\citep{van2024continual}. 
In the continual learning (CL) literature, mitigation strategies are commonly categorized into parameter regularization (e.g., \citet{kirkpatrick2017overcoming,zenke2017continual}), knowledge distillation, and architectural isolation. 
However, scaling these approaches to LLMs remains challenging: regularization requires reliable parameter importance estimation across billions of weights, distillation introduces additional training and storage overhead~\citep{shi2024continual,xu2024survey}, and architectural methods complicate deployment and adaptation~\citep{rusu2016progressive,shi2024continual}. 
As a result, experience replay has emerged as a particularly practical solution, as it operates at the data level and integrates naturally with parameter-efficient fine-tuning methods such as LoRA~\citep{rolnick2019experience,aljundi2019online,chaudhry2019tiny,hu2022lora,wang2025parameter}.

\subsection{Replay-Based Strategies for LLMs}
Replay alleviates forgetting by reintroducing samples from past tasks during training. 
Early approaches adopt fixed interleaving, inserting replay batches at uniform intervals, which is computationally efficient but ignores forgetting dynamics~\citep{rolnick2019experience}. 
Subsequent work explores priority sampling~\citep{aljundi2019online,schaul2015prioritized,aljundi2019gradient} and dynamic scheduling based on validation accuracy or training loss, improving retention at the cost of frequent evaluations and increased computation. 
Recent studies further investigate \textit{logit-level replay}~\citep{buzzega2020dark,buzzega2021rethinking} and \textit{synthetic replay}~\citep{shin2017continual,huang2024mitigating}, where pseudo-samples are generated when original data are unavailable. 
Memory-augmented approaches extend replay with explicit long-term storage and consolidation mechanisms~\citep{rebuffi2017icarl,zhong2024memorybank,shan2025cognitive}. 
Despite these advances, existing methods still face trade-offs between effectiveness and efficiency, motivating the search for more principled and lightweight replay scheduling strategies~\citep{li2025cmt,gutierrez2025rag}.

\subsection{Cognitive-Inspired Replay Scheduling}

Cognitive science suggests that forgetting follows a nonlinear trajectory: the Ebbinghaus forgetting curve shows that memory decays exponentially and can be reinforced through spaced repetition at expanding intervals~\citep{ebbinghaus2013image,wixted2004psychology,cepeda2006distributed,pashler2007enhancing}. 
Although spaced repetition is widely used in education and cognitive modeling, its application to replay scheduling in LLM continual learning remains limited~\citep{shi2024continual,zheng2025towards}. 
Some recent studies explore learned schedulers via reinforcement learning or search-based methods~\citep{klasson2022learn}, but these approaches often incur substantial computational cost and lack interpretability. 
Grounding replay timing in the Ebbinghaus principle offers a lightweight and theoretically motivated alternative that better aligns with the temporal dynamics of forgetting~\citep{mozer2008optimal,settles2016trainable,cepeda2008spacing}, motivating our approach to cognitively inspired replay scheduling for LLM fine-tuning.

%% file: sections/6-conclusion.tex
\section{Conclusion}
We proposed \tool, a memory-aware replay framework for continual fine-tuning of LLMs that combines sample-level retention modeling with adaptive replay scheduling in a parameter-efficient LoRA pipeline. 

Across both 3-task and extended 11-task continual learning settings, \tool\ delivers stable gains over fixed, loss-based, and accuracy-based replay on multiple backbones (Qwen2.5-7B, LLaMA-3.1-8B, Gemma2-9B), with the largest improvements on long-context and reasoning benchmarks where early-task forgetting is most severe. Ablations show that \tool\ remains robust to replay ratio, buffer size, and scheduling choices, and achieves reliable long-term retention with minimal computational and memory overhead.

Overall, \tool\ provides a practical and scalable approach to long-horizon continual fine-tuning, balancing retention, efficiency, and interpretability.

%% file: sections/impact_statement.tex
\section*{Impact Statement}
This paper presents work whose goal is to advance the field
of Machine Learning. There are many potential societal
consequences of our work, none which we feel must be
specifically highlighted here.

%% file: sections/Acknowledgements.tex
\section*{Acknowledgements}
We acknowledge USTC for providing computational resources, and thank Dr. Lin Yang and Hanzhu Chen for helpful discussions.

%% file: sections/appendix.tex
\appendix

\section{Derivations for Sample-Level Memory Dynamics}
\label{app:sample_level_derivations}

This appendix provides derivations and technical clarifications for the
sample-level memory model introduced in Section~\ref{method: sample_modeling}
(see Eqs.~\ref{eq:retention_update}--\ref{eq:epoch_update} in the main text).
We state the modeling assumptions behind the hazard-based retention dynamics,
its continuous-time survival form, the epoch-level discretization,
and the consolidation rule applied at review events.

\subsection{Notation and Review-Event Definition}
\label{app:notation}

We index training by discrete steps $t\in\mathbb{N}$. For each sample $i$, we maintain:
\begin{itemize}
    \item \textbf{Memory strength} $m_{i,t}\in(0,1]$, interpreted as the retained strength (or survival probability) of sample $i$ at step $t$;
    \item \textbf{Stability} $S_{i,t}>0$, controlling resistance to forgetting (larger $S_{i,t}$ implies slower decay);
    \item \textbf{Instantaneous hazard} $h_{i,t}\ge 0$, the per-step decay rate used in the multiplicative retention update;
    \item \textbf{Normalized loss} $\widehat{\ell}_{i,t}\in[0,1]$, a denoised, scale-stabilized per-sample loss signal;
    \item A monotone mapping $\phi:[0,1]\rightarrow[0,1]$ applied to $\widehat{\ell}_{i,t}$ (e.g., identity or a calibrated sigmoid).
\end{itemize}
We also use nonnegative scalars $\alpha_i$ (baseline drift) and $\gamma_d$ (difficulty sensitivity).

\paragraph{Review times and inter-review interval.}
We denote by $\mathcal{R}_i\subset\mathbb{N}$ the \emph{set of review times} of sample $i$, i.e., the set of training steps at which sample $i$ is explicitly revisited (replayed) by the sampler/scheduler. Formally,
\[
t\in\mathcal{R}_i \quad \Longleftrightarrow \quad \text{sample $i$ is selected and used for training at step $t$ as a replay exposure.}
\]
For any review step $t\in\mathcal{R}_i$, we use $t^{+}$ to denote the \emph{post-review} state immediately after the review update is applied. Let
\[
t_i^{\star}(t)\;=\;\max\{\tau\in\mathcal{R}_i:\,\tau<t\}
\]
be the most recent review time strictly before $t$ (if it exists). The elapsed steps since the previous review are
\[
\Delta t_i(t)\;=\;t-t_i^{\star}(t),
\]
which is the discrete inter-review interval used in the consolidation rule. When the dependence on $t$ is clear, we abbreviate $\Delta t_i(t)$ as $\Delta t_i$.

\paragraph{Core per-step retention update.}
Between review events, the memory strength decays multiplicatively:
\begin{equation}
m_{i,t+1}=m_{i,t}\exp(-h_{i,t}),
\qquad
h_{i,t}=\frac{\alpha_i+\gamma_d\,\phi(\widehat{\ell}_{i,t})}{S_{i,t}},
\label{eq:retention_update_app}
\end{equation}
which guarantees $m_{i,t}\in(0,1]$ for all $t$ given $m_{i,0}\in(0,1]$ and $h_{i,t}\ge 0$.

\subsection{Loss Denoising and Normalization}
\label{app:loss_norm}

Let $\ell_{i,t}$ denote the raw per-sample loss (e.g., token-averaged NLL). We compute a denoised loss using an exponential moving average (EMA):
\begin{equation}
\widetilde{\ell}_{i,t}
= \beta_{\mathrm{ema}}\,\widetilde{\ell}_{i,t-1} + (1-\beta_{\mathrm{ema}})\,\ell_{i,t},
\quad \beta_{\mathrm{ema}}\in(0,1).
\label{eq:ema_app}
\end{equation}
We then apply robust quantile normalization to obtain $\widehat{\ell}_{i,t}\in[0,1]$:
\begin{equation}
\widehat{\ell}_{i,t}
= \mathrm{clip}\!\left(
\frac{\widetilde{\ell}_{i,t}-Q_{q_l}}{Q_{q_u}-Q_{q_l}},
\,0,\,1\right),
\label{eq:quantile_app}
\end{equation}
where $Q_{q}$ denotes a running quantile of $\widetilde{\ell}_{i,t}$ and $(q_l,q_u)$ are fixed quantile levels.

\subsection{Closed-Form Between Reviews and Survival Form}
\label{app:survival}

Unrolling the multiplicative decay between two reviews gives a closed form.

\begin{lemma}[Closed form between reviews]
\label{lem:closed_form}
Fix a review time $t_i^{\star}$ and assume no review occurs on $[t_i^{\star},t)$.
Then
\begin{equation}
m_{i,t}
= m_{i,t_i^{\star}}\,
\exp\!\Big(-\sum_{\tau=t_i^{\star}}^{t-1} h_{i,\tau}\Big).
\label{eq:closed_form_app}
\end{equation}
\end{lemma}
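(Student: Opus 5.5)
The plan is a direct induction on the number of elapsed steps $n = t - t_i^{\star}$, using only the per-step retention update in Eq.~\eqref{eq:retention_update_app}. The hypothesis that no review occurs on $[t_i^{\star},t)$ is what licenses this: at every step $\tau$ in that range, the memory strength evolves by the multiplicative decay rule and is not reset by the consolidation rule of Eq.~\eqref{eq:reset_spacing}.

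One bookkeeping point must be fixed first. At the review time $t_i^{\star}$ itself, the quantity $m_{i,t_i^{\star}}$ should be read as the post-review state $m_{i,t_i^{\star+}}$, which equals $1$ by Eq.~\eqref{eq:reset_spacing}. The decay from $t_i^{\star}$ to $t_i^{\star}+1$ then uses $h_{i,t_i^{\star}}$, evaluated with the post-review stability $S_{i,t_i^{\star+}}$. Under this convention, "no review on $[t_i^{\star},t)$" is understood to exclude any review at times strictly after $t_i^{\star}$. This interpretive step is the only real obstacle. Once it is made explicit, the statement holds for whichever value $m_{i,t_i^{\star}}$ denotes, so long as the update from $t_i^{\star}$ onward is the decay rule.

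\emph{Base case} ($n=0$, i.e.\ $t=t_i^{\star}$). The sum is empty, the exponential equals $1$, and the identity is trivial.

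\emph{Inductive step.} Assume the formula holds at $t$, and suppose there is no review on $[t_i^{\star},t+1)$. Then step $t$ is a pure decay step, so
\[
m_{i,t+1} = m_{i,t}\, e^{-h_{i,t}} = m_{i,t_i^{\star}} \exp\!\Big(-\sum_{\tau=t_i^{\star}}^{t-1} h_{i,\tau}\Big)\, e^{-h_{i,t}} = m_{i,t_i^{\star}} \exp\!\Big(-\sum_{\tau=t_i^{\star}}^{t} h_{i,\tau}\Big),
\]
by $e^{a}e^{b}=e^{a+b}$. This closes the induction.

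Finally, I will note two consequences. Since each $h_{i,\tau}\ge 0$ (because $\alpha_i,\gamma_d\ge 0$, $\phi\ge 0$, and $S_{i,\tau}>0$), the formula shows that $m_{i,t}$ is nonincreasing between reviews and stays in $(0,1]$. Replacing the sum by an integral of a piecewise-constant hazard yields the survival form in Eq.~\eqref{eq:discrete_survival}, with equality rather than approximation when $h_i(\tau)$ is taken to be constant on unit intervals.
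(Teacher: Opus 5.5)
Your proof is correct and matches the paper's argument: the paper iterates the recurrence $m_{i,\tau+1}=m_{i,\tau}e^{-h_{i,\tau}}$ over $\tau=t_i^{\star},\dots,t-1$ to obtain a product of exponentials, then collapses that product into the exponential of the sum, which is exactly what your induction makes formal. Your reading of $m_{i,t_i^{\star}}$ as the post-review state, with the hypothesis excluding only reviews strictly after $t_i^{\star}$, usefully clarifies a point the paper's statement leaves implicit.
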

\begin{proof}
For $\tau=t_i^{\star},\dots,t-1$, iterating the recurrence yields
\[
m_{i,t}
= m_{i,t_i^{\star}}\prod_{\tau=t_i^{\star}}^{t-1}\exp(-h_{i,\tau}).
\]
Using $\prod_{\tau}\exp(-h_{i,\tau})=\exp(-\sum_{\tau}h_{i,\tau})$ completes the proof.
\end{proof}

\paragraph{Stationary (piecewise-constant) hazard over an inter-review interval.}
On an interval without review events $[t_i^{\star},t)$, the stability is unchanged by design
(i.e., $S_{i,\tau}=S_{i,t_i^{\star}}$ for all $\tau\in[t_i^{\star},t)$), since $S_{i,t}$ is updated only at $t\in\mathcal{R}_i$.
Moreover, we approximate the difficulty signal by an interval-average (or epoch-average) loss:
\[
\bar{\ell}_{i,[t_i^{\star},t)} \;=\; \frac{1}{t-t_i^{\star}}\sum_{\tau=t_i^{\star}}^{t-1}\widehat{\ell}_{i,\tau},
\qquad
\bar{\phi}_{i,[t_i^{\star},t)} \;=\; \phi\!\big(\bar{\ell}_{i,[t_i^{\star},t)}\big).
\]
Under these approximations, the hazard becomes approximately constant on $[t_i^{\star},t)$:
\begin{equation}
h_{i,\tau}\;\approx\;\bar{h}_{i,[t_i^{\star},t)}
\;=\;\frac{\alpha_i+\gamma_d\,\bar{\phi}_{i,[t_i^{\star},t)}}{S_{i,t_i^{\star}}},
\qquad \tau\in[t_i^{\star},t).
\label{eq:approx_const_hazard}
\end{equation}
Substituting into Eq.~\eqref{eq:closed_form_app} yields the exponential retention form
\begin{equation}
m_{i,t}\;\approx\; m_{i,t_i^{\star}}\,\exp\!\big(-\bar{h}_{i,[t_i^{\star},t)}\,\Delta t_i\big),
\qquad \Delta t_i=t-t_i^{\star},
\label{eq:stationary_app_revised}
\end{equation}
which recovers an Ebbinghaus-style decay with effective rate $\bar{h}_{i,[t_i^{\star},t)}$.

\subsection{Epoch-Level Approximation and Error Control}
\label{app:epoch}

To reduce the overhead of updating $m_{i,t}$ at every training step, we approximate the hazard dynamics by an epoch-wise (piecewise-constant) hazard. Let $\{T_0,\dots,T_E\}$ be epoch boundaries and $\Delta t_e=T_e-T_{e-1}$. The main text adopts
\begin{equation}
h_{i,T_e}=\frac{\alpha_i+\gamma_d\,\phi(\widehat{\ell}_{i,T_e})}{S_{i,T_e}},
\qquad
m_{i,T_e}=m_{i,T_{e-1}}\exp(-h_{i,T_e}\,\Delta t_e),
\tag{\ref{eq:epoch_update} revisited}
\end{equation}
i.e., a right-endpoint Riemann discretization (using epoch-end statistics for $\widehat{\ell}$ and $S$).

\begin{proposition}[Riemann approximation error]
\label{prop:riemann}
Let $\tilde m_i(t)=\tilde m_i(T_{e-1})\exp\!\big(-\int_{T_{e-1}}^{t} h_i(\tau)\,d\tau\big)$ be the continuous-time solution on $[T_{e-1},T_e]$.
If $h_i(\tau)$ is $L$-Lipschitz on $[T_{e-1},T_e]$, then
\begin{equation}
\big|\log m_{i,T_e}-\log \tilde m_i(T_e)\big|
\le \tfrac{L}{2}\,(\Delta t_e)^2.
\label{eq:riemann_error}
\end{equation}
Hence the epoch-level approximation converges to the continuous-time solution as $\max_e \Delta t_e\to 0$.
\end{proposition}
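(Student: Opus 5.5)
The plan is to compare the two log-retention increments over a single epoch, interpreted as starting from a common state, $m_{i,T_{e-1}}=\tilde m_i(T_{e-1})$; this is how the statement is meant, as a local truncation error. Taking logarithms of the discrete update in Eq.~\eqref{eq:epoch_update} and of the continuous solution gives
\[
\log m_{i,T_e}-\log \tilde m_i(T_e) \;=\; \int_{T_{e-1}}^{T_e} h_i(\tau)\,d\tau \;-\; h_i(T_e)\,\Delta t_e ,
\]
where the continuous hazard $h_i(\tau)$ is identified at the right endpoint with the epoch-end hazard $h_{i,T_e}$ built from $\widehat{\ell}_{i,T_e}$ and $S_{i,T_e}$. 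The whole argument therefore reduces to bounding a right-endpoint Riemann (rectangle) rule error for a Lipschitz integrand.

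Next, I write the difference as a single integral,
\[
\int_{T_{e-1}}^{T_e}\big(h_i(\tau)-h_i(T_e)\big)\,d\tau .
\]
The Lipschitz hypothesis gives $|h_i(\tau)-h_i(T_e)|\le L\,(T_e-\tau)$. Integrating this bound over the epoch yields $L(\Delta t_e)^2/2$, which is exactly Eq.~\eqref{eq:riemann_error}. Because only one-sided differences from the right endpoint appear, this is all that is required.

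For the convergence claim, I sum the per-epoch log errors over a fixed horizon. The total is at most $\sum_e \tfrac{L}{2}(\Delta t_e)^2 \le \tfrac{L}{2}\,(\max_e\Delta t_e)\,(T_E-T_0)$, which tends to $0$ as the mesh goes to zero.

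Strictly, when the starting states differ, errors propagate additively in log space. This holds because both the discrete and the continuous update are translations of $\log m$ by a state-independent amount, since $h$ does not depend on $m$. That observation justifies the telescoping without any Gr\"onwall-type growth factor. Exponentiating then gives convergence of $m$ itself, since $m\in(0,1]$ and $\exp$ is $1$-Lipschitz on $(-\infty,0]$.

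The main obstacle is conceptual rather than computational: making precise that $h_{i,T_e}$ in the discrete scheme coincides with $h_i(T_e)$ of the continuous model. This requires no review event inside the epoch, so that $S$ is constant as in Section~\ref{app:survival}, or else treating $S$ and $\widehat{\ell}$ as part of a Lipschitz path. I will state this as a standing assumption for the proof.
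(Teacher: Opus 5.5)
Your proposal is correct and matches the paper's proof. Both start from a common state and reduce the claim to the right-endpoint rectangle-rule error $|I-Q|$, with $I=\int_{T_{e-1}}^{T_e}h_i(\tau)\,d\tau$ and $Q=h_i(T_e)\,\Delta t_e$, bounded by $\int_{T_{e-1}}^{T_e}L\,(T_e-\tau)\,d\tau=\tfrac{L}{2}(\Delta t_e)^2$.

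Your additions go slightly beyond the paper and are sound. The log-space telescoping works because $h$ does not depend on $m$, and it gives a global error of at most $\tfrac{L}{2}(\max_e\Delta t_e)(T_E-T_0)$, assuming one Lipschitz constant holds across all epochs. This, together with your stated identification $h_{i,T_e}=h_i(T_e)$, makes explicit what the paper's ``hence converges'' sentence leaves implicit.
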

\begin{proof}
Let $I=\int_{T_{e-1}}^{T_e} h_i(\tau)\,d\tau$ and $Q=h_i(T_e)\,\Delta t_e$.
By $L$-Lipschitz continuity, $|h_i(\tau)-h_i(T_e)|\le L|\tau-T_e|$ for all $\tau$ in the interval, so
$|I-Q|\le\int_{T_{e-1}}^{T_e} L|\tau-T_e|\,d\tau = \tfrac{L}{2}(\Delta t_e)^2$.
Since $\log \tilde m_i(T_e)-\log \tilde m_i(T_{e-1})=-I$ and $\log m_{i,T_e}-\log m_{i,T_{e-1}}=-Q$, the claim follows.
\end{proof}

\paragraph{Lazy update.}
In practice, we store $(t_i^\star, m_{i,t_i^\star}, S_{i,t_i^\star})$ and update $m_{i,t}$ only when queried.
A convenient implementation uses
$m_{i,t}=m_{i,t_i^\star}\exp(-\widehat h_{i,e}\,\Delta t_i)$,
where $\widehat h_{i,e}$ is the most recent epoch-level hazard estimate (e.g., $h_{i,T_e}$) and $\Delta t_i=t-t_i^\star$.
This avoids per-step updates while preserving the same survival form.

\subsection{Generalized Consolidation at Review}
\label{app:gcr}

At review times $t\in\mathcal{R}_i$, we reset retention and increase stability according to
\begin{equation}
\begin{aligned}
 m_{i,t^{+}} &= 1, \\
 S_{i,t^{+}} &= S_{i,t}
 + \eta_s\,(S_{\max}-S_{i,t})^{\beta}
 \exp\!\big(-\rho\,\Delta t_i(t)\big)
 (1-m_{i,t})^{\gamma_s}
 + \epsilon_t,
\end{aligned}
\tag{\ref{eq:reset_spacing} revisited}
\end{equation}
where $\epsilon_t\sim\mathcal{N}(0,\sigma_s^2)$ and $\Delta t_i(t)=t-t_i^{\star}$.

\paragraph{Monotonicity and boundedness.}
Assume $\eta_s\!\ge\!0$, $\beta\!\in\!(0,1]$, $\rho\!\ge\!0$, $\gamma_s\!\ge\!0$, and $\mathbb{E}[\epsilon_t]=0$.
Then $\mathbb{E}[S_{i,t^{+}}\mid S_{i,t}]\ge S_{i,t}$ and the expected increment vanishes as
$S_{i,t}\to S_{\max}$ because $(S_{\max}-S_{i,t})^{\beta}\to 0$.
In implementations we additionally clip $S_{i,t}\in[S_{\min},S_{\max}]$ to enforce boundedness.

\paragraph{Spacing implication (time-to-threshold).}
Between reviews, with average hazard $\bar h_i\approx(\alpha_i+\gamma_d\phi(\widehat{\ell}_i))/S_{i,t}$,
the time $\tau^\star$ to hit a retention threshold $\theta\in(0,1)$ satisfies
\begin{equation}
\tau^\star
= \frac{S_{i,t}}{\alpha_i+\gamma_d\phi(\widehat{\ell}_i)}\,
\log\!\frac{1}{\theta},
\label{eq:timetothreshold_app}
\end{equation}
which increases with stability $S_{i,t}$. Thus repeated consolidation lengthens the optimal inter-review interval.

\subsection{Choices of the Monotone Mapping $\phi$}
\label{app:phi}

The mapping $\phi:[0,1]\!\to\![0,1]$ converts the normalized loss $\widehat{\ell}_{i,t}$ into a monotone difficulty signal that modulates the hazard.
In all experiments, we instantiate $\phi$ as a calibrated sigmoid (Table~\ref{tab:memory_hparams}), while other monotone choices yield similar behavior.

\paragraph{Identity.}
$\phi(x)=x$ (linear sensitivity). This is simple but can be more sensitive to noisy loss spikes.

\paragraph{Calibrated sigmoid (default).}
\begin{equation}
\phi(x)=\frac{1}{1+\exp[-k(x-c)]},\qquad k>0,\; c\in(0,1).
\label{eq:sigmoid_phi}
\end{equation}
which saturates for extreme losses and concentrates sensitivity around the difficulty band near $c$ (with slope controlled by $k$).

\paragraph{Power/log mappings.}
$\phi(x)=x^{p}$ ($p>0$) or $\phi(x)=\log(1+\kappa x)/\log(1+\kappa)$ ($\kappa>0$),
allowing sub-/super-linear emphasis on harder examples.

All choices preserve monotonicity. Since $S_{i,t}>0$ and $\gamma_d\ge 0$, we have
$\partial h_{i,t}/\partial \widehat{\ell}_{i,t} = (\gamma_d/S_{i,t})\,\phi'(\widehat{\ell}_{i,t}) \ge 0$
(where differentiable), hence larger loss implies a larger hazard at fixed stability.

\subsection{Hyperparameter Settings and Practical Defaults}
\label{app:hyperparams}

Table~\ref{tab:memory_hparams} summarizes the hyperparameters used in the sample-level memory model.
Unless otherwise stated, all hyperparameters are shared across samples.

\begin{table}[t]
\centering
\small
\begin{tabular}{lcccc}
\toprule
\textbf{Parameter} & \textbf{Default} & \textbf{Candidate Values} & \textbf{Role} & \textbf{Where used} \\
\midrule
$\alpha$ (shared) & $0.01$ & $\{0,\,0.005,\,0.01,\,0.02,\,0.05\}$ & baseline decay
& Eq.~\eqref{eq:retention_update}, Eq.~\eqref{eq:epoch_update} \\
$\gamma_d$ & $0.20$ & $\{0.05,\,0.10,\,0.20,\,0.30,\,0.50\}$ & loss sensitivity
& Eq.~\eqref{eq:retention_update}, Eq.~\eqref{eq:epoch_update} \\
$\phi(\cdot)$ & sigmoid & --- & monotone mapping
& Eq.~\eqref{eq:retention_update}, Eq.~\eqref{eq:epoch_update} (via $\phi(\widehat{\ell})$) \\
$k$ (sigmoid) & $10$ & $[5,20]$ & slope / hardness band width
& Eq.~\eqref{eq:sigmoid_phi} \\
$c$ (sigmoid) & $0.5$ & $[0.4,0.6]$ & center of difficulty band
& Eq.~\eqref{eq:sigmoid_phi} \\
EMA $\beta_{\mathrm{ema}}$ & $0.95$ & $\{0.90,\,0.95,\,0.97,\,0.99\}$ & loss denoising
& Eq.~\eqref{eq:ema_app} \\
Quantiles $(q_l,q_u)$ & $(0.05,0.95)$ & fixed & robust normalization
& Eq.~\eqref{eq:quantile_app} \\
\midrule
$\eta_s$ & $0.05$ & $\{0.01,\,0.02,\,0.05,\,0.10,\,0.20\}$ & consolidation step size
& Eq.~\eqref{eq:reset_spacing} \\
$\beta_s$ & $0.5$ & $\{0.5,\,0.75,\,1.0\}$ & saturation exponent
& Eq.~\eqref{eq:reset_spacing} \\
$\rho$ & $0.01$ & $\{0,\,0.005,\,0.01,\,0.02,\,0.05\}$ & spacing sensitivity
& Eq.~\eqref{eq:reset_spacing} (via $e^{-\rho\Delta t_i}$) \\
$\gamma_s$ & $1.0$ & $\{0.5,\,1.0,\,2.0\}$ & error-driven reinforcement
& Eq.~\eqref{eq:reset_spacing} (via $(1-m_{i,t})^{\gamma_s}$) \\
$S_{\min},S_{\max}$ & $(1,10)$ & $S_{\max}\in\{5,10,20\}$ & stability clipping
& Eq.~\eqref{eq:reset_spacing}, text in App.~\ref{app:gcr} \\
$\sigma_s$ & $0$ & fixed & noise std
& Eq.~\eqref{eq:reset_spacing} (via $\epsilon_t\sim\mathcal{N}(0,\sigma_s^2)$) \\
\bottomrule
\end{tabular}
\caption{Hyperparameters for the sample-level memory dynamics (Appendix~\ref{app:sample_level_derivations}).}
\label{tab:memory_hparams}
\end{table}

\section{Derivations for Dataset-Level Replay Scheduling}
\label{app:replay_derivation}

This appendix provides derivations supporting the dataset-level replay scheduling equations
(Eqs.~\ref{eq:interval_expand}--\ref{eq:prob_weight}) in the main text.
Our goal is to connect the micro-level sample dynamics (Appendix~\ref{app:sample_level_derivations})
to macro-level replay decisions---\emph{when to replay}, \emph{how much to replay}, and \emph{which samples to replay}.

\subsection{Setup, Indexing, and Mean-Field Aggregation}
\label{app:replay_setup}

We index dataset-level replay cycles by $k=1,2,\dots$. Let $t_k$ denote the training step at which the $k$-th
dataset-level replay event is executed, and define the inter-replay interval
\[
\Delta t_r^{(k)} \;=\; t_k - t_{k-1}.
\]
At cycle $k$, we also maintain a replay ratio $\lambda_{t_k}\in[0,1]$ that controls the fraction of replayed samples in a batch,
and a per-sample replay probability $p_i^{(t_k)}$ over the replay buffer $B_{\mathrm{replay}}$.

To relate dataset-level scheduling to sample-level states, we adopt a mean-field (population-averaged) view:
we summarize the sample-level hazard and stability by aggregated statistics at cycle $k$,
\[
\bar{h}_k \;\approx\; \mathbb{E}_{i\sim B_{\mathrm{replay}}}\!\left[h_{i,t_k}\right],
\qquad
S_k \;\approx\; \mathbb{E}_{i\sim B_{\mathrm{replay}}}\!\left[S_{i,t_k}\right],
\qquad
\overline{\phi(\widehat{\ell})}\;\approx\;\mathbb{E}_{i\sim B_{\mathrm{replay}}}\!\left[\phi(\widehat{\ell}_{i,t_k})\right].
\]
This aggregation is used only to motivate a tractable global schedule; the actual algorithm operates on per-sample states.

\subsection{Derivation of Replay Interval Expansion}
\label{app:interval_expansion}

We derive why replay intervals naturally expand over cycles as stability accumulates.
Starting from the sample-level hazard form (Eq.~\ref{eq:retention_update}), we adopt a mean-field approximation at replay cycle $k$:
\begin{equation}
\bar{h}_k \;\approx\;
\mathbb{E}_{i\sim B_{\mathrm{replay}}}\!\left[h_{i,t_k}\right]
\;\approx\;
\frac{\bar{\alpha} + \bar{\gamma}_d\,\overline{\phi(\widehat{\ell})}_k}{S_k},
\label{eq:avg_hazard_k}
\end{equation}
where $\bar{\alpha}\approx\mathbb{E}[\alpha_i]$, $\bar{\gamma}_d$ is the shared loss-sensitivity coefficient,
$S_k\approx\mathbb{E}[S_{i,t_k}]$, and $\overline{\phi(\widehat{\ell})}_k\approx\mathbb{E}[\phi(\widehat{\ell}_{i,t_k})]$.
Within a single cycle, we treat $\overline{\phi(\widehat{\ell})}_k$ as slowly varying compared to the evolution of $S_k$,
so that the dominant driver of $\bar{h}_k$ is $1/S_k$.

\paragraph{Threshold-triggered replay and $\tau_k^\star\propto S_k$.}
Assuming exponential retention decay between two dataset-level replay events,
the population-average retention follows $\bar{m}(\tau)=\exp(-\bar{h}_k\,\tau)$.
If the next replay is triggered when $\bar{m}$ drops below a fixed threshold $\theta\in(0,1)$, then
\begin{equation}
\tau_k^{\star}
\;=\;
\frac{1}{\bar{h}_k}\log\!\frac{1}{\theta}
\;=\;
\frac{S_k}{\bar{\alpha} + \bar{\gamma}_d\,\overline{\phi(\widehat{\ell})}_k}
\log\!\frac{1}{\theta},
\qquad
\text{i.e.,}\quad \tau_k^\star \propto S_k.
\label{eq:tau_star}
\end{equation}
In practice, batching and discrete steps introduce mild deviations; we therefore use this threshold-derived optimal spacing
as a proxy for the realized schedule, i.e., $\Delta t_r^{(k)}\approx \tau_k^\star$.

\paragraph{From stability growth to interval expansion.}
From the sample-level consolidation rule (Eq.~\ref{eq:reset_spacing}), stability increases after replay.
Aggregating the per-sample updates yields the following schematic population-level recursion:
\begin{equation}
S_{k+1} \;=\; S_k \;+\; \Delta S_k,
\qquad
\Delta S_k
\;=\;
\eta_s (S_{\max} - S_k)^{\beta_s}
\exp\!\big(-\rho\,\Delta t_r^{(k)}\big)
(1 - \bar{m}_k)^{\gamma_s},
\label{eq:deltaS_appendix}
\end{equation}
where $\bar{m}_k$ denotes the average retention just before the $k$-th replay.
Since $\Delta t_r^{(k)}\approx \tau_k^\star \propto S_k$, the ratio of consecutive replay intervals satisfies
\begin{equation}
\frac{\Delta t_r^{(k+1)}}{\Delta t_r^{(k)}}
\;\approx\;
\frac{S_{k+1}}{S_k}
\;=\;
1+\frac{\Delta S_k}{S_k},
\label{eq:ratio_intervals}
\end{equation}
highlighting that interval expansion is governed by the relative stability gain $\Delta S_k/S_k$.

\paragraph{Exponential envelope for diminishing relative gains.}
The relative gain $\Delta S_k/S_k$ decreases with $k$ because:
(i) $(S_{\max}-S_k)^{\beta_s}$ shrinks as $S_k$ approaches $S_{\max}$ (saturating consolidation),
and (ii) $\Delta t_r^{(k)}$ increases, which further reduces $\exp(-\rho\,\Delta t_r^{(k)})$ (spacing attenuation).
To obtain a simple closed-form schedule, we \emph{parameterize} this diminishing trend using an exponential envelope:
\begin{equation}
\frac{\Delta S_k}{S_k} \;\approx\; \eta_p \exp(-\rho_p k),
\label{eq:spacing_gain_decay}
\end{equation}
where $\eta_p$ and $\rho_p$ are population-level hyperparameters summarizing the net effects of
$(\eta_s,\beta_s,\rho,\gamma_s)$ and the evolution of $\bar{m}_k$.
Substituting Eq.~\eqref{eq:spacing_gain_decay} into Eq.~\eqref{eq:ratio_intervals} yields the \textbf{interval expansion rule}:
\begin{equation}
\Delta t_r^{(k+1)} \;=\; \Delta t_r^{(k)}\left(1 + \eta_p e^{-\rho_p k}\right),
\label{eq:interval_expand_appendix}
\end{equation}
which corresponds to Eq.~\ref{eq:interval_expand} in the main text.
Since $1 + \eta_p e^{-\rho_p k} > 1$ for all $k$, replay is dense early and becomes progressively sparser as $k$ increases.

\subsection{Derivation of the Dynamic Replay Ratio}
\label{app:ratio_derivation}

Let $\lambda_t$ denote the proportion of replayed samples mixed into the batch at time $t$.
We view $\lambda_t$ as a resource-allocation variable trading \emph{marginal benefit} against \emph{marginal cost}.
Let the instantaneous replay benefit be $A_t g(\lambda)$, where $A_t\ge 0$ is the current \emph{rehearsal utility}
and $g(\lambda)$ is concave (diminishing returns), e.g., $g(\lambda)=1-e^{-b\lambda}$ with $b>0$.
We consider the proxy objective
\begin{equation}
\max_{\lambda \in [0,1]} \;\; A_t g(\lambda) - \mu \lambda,
\label{eq:ratio_optim}
\end{equation}
where $\mu>0$ weights compute cost. The stationarity condition gives
$A_t b e^{-b\lambda_t^\star}=\mu$, hence
\begin{equation}
\lambda_t^\star = \frac{1}{b}\log\!\frac{A_t b}{\mu}.
\label{eq:lambda_star}
\end{equation}

As fine-tuning proceeds, $A_t$ decreases due to stability accumulation and diminishing replay gains, so $\lambda_t^\star$ should decrease.
We adopt an exponential schedule as a simple first-order approximation to this decreasing trajectory:
\begin{equation}
\lambda_{t_k} = \lambda_{\min} +
(\lambda_0 - \lambda_{\min}) e^{-\beta_r t_k},
\label{eq:ratio_decay_appendix}
\end{equation}
corresponding to Eq.~\ref{eq:dynamic_ratio} in the main text.
This maintains higher replay intensity early and gradually approaches a small background replay rate $\lambda_{\min}$.

\subsection{Coupling Replay Probability with Memory Strength}
\label{app:prob_derivation}

We finally derive the per-sample replay probability $p_i^{(t_k)}$ used to select samples from the replay buffer $B_{\mathrm{replay}}$.
A natural (but noisy) proxy for replay utility is the expected consolidation gain,
which increases with forgettability and error-driven plasticity:
\begin{equation}
G_i(t) \;\propto\;
\frac{\alpha_i + \gamma_d\,\phi(\widehat{\ell}_{i,t})}{S_{i,t}}
(1 - m_{i,t})^{\gamma_s}.
\label{eq:gain_proxy}
\end{equation}
Directly sampling from \eqref{eq:gain_proxy} would require maintaining multiple high-variance signals.
Instead, we use the retention state $m_{i,t}$---already maintained by the sample-level dynamics and amenable to lazy updates---
as a low-variance surrogate. This yields a normalized power-law prioritization:
\begin{equation}
p_i^{(t_k)} \;=\;
\frac{m_{i,t_k}^{-\zeta}}
{\sum_{j\in B_{\mathrm{replay}}} m_{j,t_k}^{-\zeta}},
\qquad \zeta>0,
\label{eq:prob_weight_appendix}
\end{equation}
which matches Eq.~\ref{eq:prob_weight} in the main text.
Here $\zeta$ controls the strength of prioritization: $\zeta=0$ reduces to uniform sampling, while larger $\zeta$
biases selection toward low-retention (high-risk) samples.
Equivalently, $p_i^{(t_k)}\propto \exp(-\zeta\log m_{i,t_k})$.

\subsection{Summary of Hierarchical Dynamics}
\label{app:replay_summary}

Combining Eqs.~\eqref{eq:interval_expand_appendix}, \eqref{eq:ratio_decay_appendix}, and \eqref{eq:prob_weight_appendix},
the dataset-level replay schedule can be summarized as:
\begin{itemize}
    \item \textbf{Temporal spacing:} replay intervals $\Delta t_r^{(k)}$ expand as stability accumulates;
    \item \textbf{Replay intensity:} replay ratio $\lambda_{t}$ decays toward a stable baseline;
    \item \textbf{Replay prioritization:} sampling probability $p_i$ prioritizes weaker-retention samples via $m_{i,t}$.
\end{itemize}
Together, these mechanisms form a hierarchical replay policy that aligns micro-level memory decay
with macro-level temporal scheduling and replay resource allocation.

\subsection{Hyperparameter Settings for Dataset-Level Scheduling}
\label{app:replay_hparams}

For completeness, Table~\ref{tab:replay_hparams} summarizes the dataset-level scheduling hyperparameters.

\begin{table}[t]
\centering
\small
\begin{tabular}{lcccc}
\toprule
\textbf{Parameter} & \textbf{Default} & \textbf{Candidate Values} & \textbf{Role} & \textbf{Where used} \\
\midrule
$\Delta t_r^{(1)}$ & $100$ & $\{50,\,100,\,200,\,500\}$ & initial replay interval
& Eq.~\eqref{eq:interval_expand_appendix} \\
$\theta$ & $0.5$ & $\{0.3,\,0.5,\,0.7\}$ & replay trigger threshold
& Eq.~\eqref{eq:tau_star} \\
$\eta_p$ & $0.5$ & $\{0.1,\,0.3,\,0.5,\,0.7,\,1.0\}$ & early expansion gain
& Eq.~\eqref{eq:spacing_gain_decay}, Eq.~\eqref{eq:interval_expand_appendix} \\
$\rho_p$ & $0.05$ & $\{0.01,\,0.03,\,0.05,\,0.10\}$ & decay rate of expansion gain
& Eq.~\eqref{eq:spacing_gain_decay}, Eq.~\eqref{eq:interval_expand_appendix} \\
\midrule
$\lambda_0$ & $0.3$ & $\{0.1,\,0.3,\,0.5\}$ & initial replay ratio
& Eq.~\eqref{eq:ratio_decay_appendix} \\
$\lambda_{\min}$ & $0.05$ & $\{0,\,0.01,\,0.05,\,0.10\}$ & minimal replay ratio
& Eq.~\eqref{eq:ratio_decay_appendix} \\
$\beta_r$ & $1\mathrm{e}{-5}$ & $\{5\mathrm{e}{-6},\,1\mathrm{e}{-5},\,2\mathrm{e}{-5}\}$ & ratio decay rate
& Eq.~\eqref{eq:ratio_decay_appendix} \\
\midrule
$\zeta$ & $1.0$ & $\{0,\,0.5,\,1.0,\,2.0\}$ & prioritization strength
& Eq.~\eqref{eq:prob_weight_appendix} \\
\bottomrule
\end{tabular}
\caption{Hyperparameters for dataset-level replay scheduling (Appendix~\ref{app:replay_derivation}).}
\label{tab:replay_hparams}
\end{table}

\section{Training Dataset Details}
\label{app:training_datasets}

We describe the datasets used in our sequential fine-tuning experiments, 
which span instruction following, natural language understanding, 
and progressively harder quantitative reasoning tasks.
Together, they form a curriculum from general-purpose instruction learning 
to long-horizon mathematical reasoning.

\paragraph{AG News.}
AG News~\citep{zhang2015character} is a topic classification dataset consisting of news articles labeled into four categories:
\textit{World}, \textit{Sports}, \textit{Business}, and \textit{Sci/Tech}.
We use the standard training split and evaluate performance using exact-match accuracy.
Each example is formatted as a short text classification prompt, asking the model to predict the corresponding topic label.
This dataset introduces a lightweight supervised classification stage within the continual learning sequence.

\paragraph{SQuAD.}
The Stanford Question Answering Dataset (SQuAD)~\citep{rajpurkar2016squad} is an extractive reading comprehension benchmark,
where each question requires identifying a contiguous answer span from a given passage.
We evaluate performance using the standard token-level F1 score following the official evaluation protocol.
This dataset complements classification-style tasks by introducing span-based reading comprehension.

\paragraph{SciQ.}
SciQ~\citep{welbl2017crowdsourcing} is a multiple-choice science question answering dataset,
containing approximately 13.7K questions constructed via crowdsourcing and web-based extraction.
Each instance consists of a question, four answer options, and a single correct choice.
We convert each example into a multiple-choice prompt and supervise the model to generate the correct option.
Performance is evaluated using exact-match accuracy over answer choices.

\paragraph{BoolQ.}
BoolQ~\citep{clark2019boolq} is a binary (yes/no) question answering dataset derived from real user queries paired with supporting passages.
Each instance requires the model to determine whether a given statement is true or false based on the provided context.
We formulate BoolQ as a binary classification task and evaluate using exact-match accuracy.
This dataset emphasizes logical reasoning and natural language inference.

\paragraph{ARC.}
The AI2 Reasoning Challenge (ARC)~\citep{clark2018think} is a multiple-choice science question answering benchmark,
designed to test elementary-level reasoning beyond simple retrieval.
We use the challenge setting, where each question has four or five candidate answers.
All examples are formatted as multiple-choice prompts, and performance is measured by exact-match accuracy.
ARC introduces more challenging reasoning scenarios with relatively low pre-training familiarity.

\paragraph{GSM8K (reasoning-formatted).}
GSM8K~\citep{cobbe2021gsm8k} contains around 8.5K grade-school mathematical word problems.
We use a reasoning-formatted variant where each example includes a question and a step-by-step solution (rationale) followed by the final answer.
We apply light text normalization (whitespace and punctuation), symbol standardization, and the same tokenizer as the base LLM.
This stage strengthens structured numerical reasoning and provides a controlled transition from general language instruction to math-focused training.

\paragraph{Competition Math (MATH).}
Competition Math~\citep{hendrycksmath2021} (commonly referred to as the MATH dataset) consists of approximately 12K problems drawn from math competitions (e.g., AMC/AIME and Olympiad-style exams),
covering algebra, geometry, combinatorics, and number theory.
We use the short-answer setting, where each problem is paired with its final numeric or symbolic solution.
For consistency with earlier stages, we convert each instance into a reasoning-style prompt--response format when intermediate solutions are available; otherwise, we supervise only the final answer.

\paragraph{Data splits and preprocessing.}
For each dataset, we randomly split 95\% of the samples for training and 5\% for validation using a fixed random seed.
Preprocessing includes removing non-semantic markup (e.g., stray LaTeX environments) while preserving mathematical content,
standardizing numeric formats, and filtering invalid or empty answers.
All datasets are tokenized with the base LLM tokenizer to maintain vocabulary consistency.
We cap the maximum sequence length at 2048 tokens; for overlength examples, we truncate from the end, prioritizing retention of the question, reasoning steps (when present), and the final answer.

\section{Evaluation Metric Definitions}
\label{app:eval_metric}

This section defines the evaluation metrics used in our experiments: BLEU, ROUGE-L, exact-match (EM) accuracy, and the average normalized score.

\paragraph{BLEU.}
Following \citet{papineni2002bleu}, BLEU measures modified $n$-gram precision between a generated response $G$ and its reference $R$:
\begin{equation}
\mathrm{BLEU} \;=\; \mathrm{BP}\cdot \exp\!\left(\sum_{n=1}^{n_{\max}} w_n \log p_n\right),
\end{equation}
where $p_n$ is the modified $n$-gram precision, $w_n$ are uniform weights ($w_n=1/n_{\max}$), and $\mathrm{BP}$ is the brevity penalty.
Unless otherwise noted, we report BLEU-4 with smoothing enabled.
For reproducibility, we compute BLEU using \texttt{sacreBLEU} with standard tokenization and smoothing (details in the released code/config).

\paragraph{ROUGE-L.}
ROUGE-L~\citep{lin2004rouge} is based on the longest common subsequence (LCS) between $G$ and $R$.
Let $L=\mathrm{LCS}(G,R)$, then
\begin{equation}
P_{\mathrm{LCS}}=\frac{L}{|G|},\qquad
R_{\mathrm{LCS}}=\frac{L}{|R|},
\end{equation}
and the ROUGE-L F-score is
\begin{equation}
\mathrm{ROUGE\mbox{-}L}
=\frac{(1+\beta^2)\,P_{\mathrm{LCS}}\,R_{\mathrm{LCS}}}{R_{\mathrm{LCS}}+\beta^2 P_{\mathrm{LCS}}}.
\end{equation}
We set $\beta=1.2$ following common practice and compute ROUGE-L on tokenized text with the same preprocessing pipeline across methods.

\paragraph{Exact-match (EM) accuracy.}
For math reasoning datasets (\textbf{GSM8K} and \textbf{Competition Math/MATH}), we compute exact-match accuracy as
\begin{equation}
\mathrm{EM}
=\frac{1}{M}\sum_{i=1}^{M}\mathbf{1}\!\left[\mathrm{Norm}(\hat{y}_i)=\mathrm{Norm}(y_i)\right],
\end{equation}
where $\hat{y}_i$ is the model-predicted \emph{final answer}, $y_i$ is the ground truth, and $M$ is the number of evaluated samples.
$\mathrm{Norm}(\cdot)$ denotes a deterministic answer-normalization procedure (e.g., stripping whitespace and punctuation, removing wrappers such as ``\texttt{\textbackslash boxed\{\}}'', and canonicalizing numeric formats).
This metric evaluates end-to-end correctness of the final answer regardless of intermediate reasoning steps.

\paragraph{Average normalized score.}
To summarize overall stability across datasets, we compute a per-dataset normalized score
\begin{equation}
\tilde{s}_d=\frac{s_d}{\max(s_d^{\max},\epsilon)},
\end{equation}
where $s_d$ is the raw metric on dataset $d$ after the \emph{final} training stage (BLEU/ROUGE-L for instruction data, EM for math data),
and $s_d^{\max}$ is the maximum score achieved by the \emph{same training run} across all stages when evaluated on dataset $d$.
We use $\epsilon$ (a small constant) for numerical safety.
The average normalized score is
\begin{equation}
\mathrm{Score}_{\mathrm{avg}}=\frac{1}{D}\sum_{d=1}^{D}\tilde{s}_d,
\end{equation}
where $D=3$ is the number of datasets.
A higher value indicates better overall retention and cross-domain stability.

\section{Training and Hyperparameter Settings}
\label{app:training_details}

This appendix summarizes the optimization configuration shared by all methods and clarifies practical settings used by the replay implementation.
Model-agnostic replay/memory hyperparameters are reported in Appendix~\ref{app:sample_level_derivations} (Table~\ref{tab:memory_hparams}) and Appendix~\ref{app:replay_derivation} (Table~\ref{tab:replay_hparams}); here we focus on training setup and engineering details required for reproducibility.

\subsection{General Fine-tuning Configuration}
\label{app:training_general}

All methods are implemented in \textbf{LLaMAFactory}~\citep{zheng2024llamafactory} and evaluated on three pretrained backbones:
\textbf{Gemma2-9B}, \textbf{Qwen2.5-7B}, and \textbf{LLaMA-3.1-8B} (Appendix~\ref{app:model_details}).
Unless otherwise stated, we keep the optimizer, LoRA configuration, training steps, and evaluation protocol identical across backbones and replay strategies.
Because GPU memory footprints differ across backbones, the \emph{micro-batch size} may vary; we therefore adjust gradient accumulation to keep the \emph{effective} batch size fixed.

\begin{table}[t]
\centering
\caption{LoRA fine-tuning and optimization hyperparameters (shared across replay strategies).}
\label{tab:training_hyperparams}
\small
\begin{tabular}{ll}
\toprule
\textbf{Item} & \textbf{Setting} \\
\midrule
Backbone model & Gemma2-9B / Qwen2.5-7B / LLaMA-3.1-8B \\
Fine-tuning framework & LLaMAFactory~\citep{zheng2024llamafactory} \\
LoRA rank $r$ & 8 \\
LoRA scaling $\alpha_{\mathrm{lora}}$ & 16 \\
LoRA target modules & \texttt{q\_proj}, \texttt{v\_proj} (all Transformer layers) \\
Optimizer & AdamW \\
Learning rate & $2\times10^{-4}$ \\
Adam $\beta_1,\beta_2$ & (0.9, 0.999) \\
Weight decay & 0.01 \\
LR schedule & cosine decay with warmup ratio 0.05 \\
Max sequence length & 2048 \\
Micro-batch size (per GPU) & backbone-dependent (memory-limited) \\
Gradient accumulation & adjusted to match effective batch \\
Effective batch size & fixed across backbones (256 sequences) \\
Training steps (per stage) & 2000 \\
Evaluation frequency & every 100 steps \\
Precision & bfloat16 (fp16 fallback if bf16 unsupported) \\
\bottomrule
\end{tabular}
\end{table}

\subsection{Replay Implementation Details}
\label{app:training_replay_impl}

Dataset-level scheduling uses $(\Delta t_r^{(1)}, \eta_p, \rho_p)$ for interval expansion, $(\lambda_0,\lambda_{\min},\beta_r)$ for replay ratio decay, and $\zeta$ for prioritization (Table~\ref{tab:replay_hparams}).
Sample-level memory dynamics follow the hazard and consolidation parameters in Table~\ref{tab:memory_hparams}.
The following additional settings control replay overhead and buffering:

\begin{table}[t]
\centering
\caption{Practical replay implementation settings.}
\label{tab:replay_impl_params}
\small
\begin{tabular}{ll}
\toprule
\textbf{Item} & \textbf{Setting} \\
\midrule
Replay buffer $B_{\mathrm{replay}}$ size & 1024 samples (maintained within each stage) \\
Buffer refresh interval & every 50 training steps \\
Replay batch composition & mix current-task samples with replayed samples at ratio $\lambda_t$ \\
Replay sampling distribution & $p_i^{(t_k)} \propto m_{i,t_k}^{-\zeta}$ (Eq.~\eqref{eq:prob_weight_appendix}) \\
Lazy state update & compute $m_{i,t}$ on demand using the latest epoch-/cycle-level hazard estimate (App.~\ref{app:epoch}) \\
Staleness safety cap & force refresh if $\Delta t_i$ exceeds a fixed cap (default: 10 epochs) \\
\bottomrule
\end{tabular}
\end{table}

\subsection{Reproducibility Notes}
\label{app:training_repro}

All runs use a fixed random seed (42) for data shuffling, replay-buffer initialization, and sampling.
We train with PyTorch 2.x under distributed data-parallel (DDP) with CUDA.
Data loading uses 8 workers per GPU.
We log training curves, per-stage evaluation metrics, and replay statistics (e.g., $\lambda_t$, $\Delta t_r^{(k)}$, and summary statistics of $m_{i,t}$) for all methods.

\paragraph{Sequence truncation.}
All inputs are truncated to the maximum sequence length (2048 tokens).
When truncation is required, we prioritize preserving the prompt and the final answer segment, and remove trailing tokens first.

\section{Model Details}
\label{app:model_details}

We employ three open-source large language models (LLMs) as pretrained backbones for all experiments:
Gemma2-9B~\citep{team2024gemma}, LLaMA-3.1-8B~\citep{dubey2024llama}, and Qwen2.5-7B~\citep{yang2024qwen2}.
All models are obtained from the Hugging Face Hub under their respective licenses and are used in their \emph{pretrained} (non--instruction-tuned) variants to minimize prior exposure to our evaluation tasks.

\paragraph{Gemma2-9B.}
Gemma2-9B is a decoder-only Transformer model with approximately 9B parameters.
We use the official pretrained checkpoint and tokenizer released by Google.
While each backbone supports a large native context window, in our experiments we cap the maximum sequence length to 2048 tokens for fair comparison across backbones and to match the sequential fine-tuning setup.

\paragraph{LLaMA-3.1-8B.}
LLaMA-3.1-8B is a decoder-only Transformer with 8B parameters.
We use the Hugging Face checkpoint (\texttt{meta-llama/Llama-3.1-8B}) and its default tokenizer.
As above, we set the effective training/evaluation maximum sequence length to 2048 tokens for consistency across backbones.

\paragraph{Qwen2.5-7B.}
Qwen2.5-7B is a 7B-parameter pretrained model with strong multilingual coverage and long-context capability.
We use the base checkpoint (\texttt{Qwen/Qwen2.5-7B}) and the official tokenizer.
This backbone is used for our primary ablations and analysis due to its strong reasoning performance under the same fine-tuning budget.

\paragraph{Implementation and fine-tuning setup.}
All backbones are fine-tuned using \textbf{LLaMAFactory}~\citep{zheng2024llamafactory} with parameter-efficient LoRA~\citep{hu2022lora}.
We freeze all backbone weights and train rank-$r{=}8$ LoRA adapters inserted into the attention projection layers (\texttt{q\_proj} and \texttt{v\_proj}) for all Transformer blocks.
Optimization uses AdamW with learning rate $2\times10^{-4}$, warmup ratio $0.05$, and weight decay $0.01$.
We use bfloat16 training when supported (fp16 otherwise).
Due to different memory footprints across backbones, micro-batch sizes may vary; we adjust gradient accumulation to keep the \emph{effective} batch size fixed across backbones (Appendix~\ref{app:training_details}).
For reproducibility, we fix random seeds to 42 and keep data ordering identical across model runs.